\newtheorem{proposition}{Proposition}
\newtheorem{theorem}{Theorem}
\newtheorem{lemma}{Lemma}
\newtheorem{corollary}{Corollary}
\newcommand{\ext}{\mathop{\mathrm{ext}}}
\newcommand{\argmax}{\mathop{\mathrm{argmax}}}
\newcommand{\argext}{\mathop{\mathrm{argext}}}
\title{Balancing Two-Player Stochastic Games with Soft Q-Learning}
\author{
Jordi Grau-Moya, 
Felix Leibfried and
Haitham Bou-Ammar
\\ 
PROWLER.io\\
jordi@prowler.io,
felix@prowler.io,
haitham@prowler.io
}
\begin{document}

\maketitle

\begin{abstract}
Within the context of video games the notion of perfectly rational agents can be undesirable as it leads to uninteresting situations, where humans face tough adversarial decision makers. Current frameworks for stochastic games and reinforcement learning prohibit tuneable strategies as they seek optimal performance. In this paper, we enable such tuneable behaviour by generalising soft Q-learning to stochastic games, where more than one agent interact strategically. We contribute both theoretically and empirically. On the theory side, we show that games with soft Q-learning exhibit a unique value and generalise team games and zero-sum games far beyond these two extremes to cover a continuous spectrum of gaming behaviour. Experimentally, we show how tuning agents' constraints affect performance and demonstrate, through a neural network architecture, how to reliably balance games with high-dimensional representations. 
\end{abstract}

\section{Introduction}

Stochastic Games (SG) provide a natural extension of reinforcement learning~\cite{sutton1998reinforcement,mnih2015human,busoniu2010reinforcement,peters2010relative} to multiple agents, where adapting  strategies in presence of humans or other agents is necessary~\cite{shapley1953stochastic,littman1994markov,littman2001friend}. 

Current frameworks for stochastic games assume perfectly rational agents -- an assumption that is violated in a variety of real-world scenarios, e.g., human-robot interaction~\cite{goodrich2007human}, and pick-up and drop-off domains \cite{agussurja2012toward}. In the context of computer games, the main focus of this paper, such a problem of perfect rationality is even more amplified. Here, in fact, it is not desirable to design agents that seek optimal behaviour as this leads humans to become quickly uninterested when playing against adversarial agents that are impossible to defeat~\cite{hunicke2005case}. Hence, to design games with adaptable and balancing properties tailored to human-level performance, there is a need to extend state-of-the-art SG beyond optimality to allow for tuneable behaviour.

One method to produce tuneable behaviour in reinforcement learning is to introduce an adjustable Kullback-Leibler (KL) constraint between the agent's policy and a reference one. In particular, by increasingly strengthening this constraint we can obtain policies increasingly close to the reference policy, and vice versa. Bounding policy updates in such a manner has been previously introduced in literature under different names. Examples include KL control, relative entropy policy search~\cite{peters2010relative}, path integral control~\cite{kappen2005path,braun2011path}, information-theoretic bounded rationality~\cite{ortega2013thermodynamics}, information-theory of decisions and actions~\cite{tishby2011information,rubin2012trading}, and soft Q-learning~\cite{fox2016taming,haarnoja2017reinforcement}. Targeted problems using these methods are also wide-spread, e.g., tackling the overestimation problem in tabular Q-learning~\cite{fox2016taming} and in Deep Q-networks~\cite{leibfried2017information}, accounting for model misspecification~\cite{grau2016planning}, introducing safe policy updates in robot learning~\cite{schulman2015trust}, and inducing risk-sensitive control~\cite{vandenBroek:2010}.

\textbf{Contributions:} Though abundant in literature, previous works only consider single-agent problems and are not readily applicable to stochastic games, which consider more than one interacting entity. With game balancing as our motivation, we propose a novel formulation of SG where agents are subject to KL constraints. In particular, our formulation introduces two KL constraints, one for each agent, limiting the space of available policies, which, in turn, enables tuneable behaviour. We then introduce an online strategy that can be used for game-play balancing even in high-dimensional spaces through a neural network architecture. 

In short, the contributions of this paper can be summarised as: (1) proving convergence of the two-player soft Q-learning to a fixed point through contractions; (2) generalising team and zero-sum games in a continuous fashion and showing a unique value; (3) demonstrating convergence to correct behaviour by tuning the KL constraints on a simplified grid-world scenario; (4) extending our method to handle high-dimensional spaces; and (5) inferring opponent's Lagrange multiplier by maximum-likelihood, and demonstrating game-balancing behaviour on the game of Pong.

\section{Background}\label{sec:Background}
\subsection{Reinforcement Learning}
In reinforcement learning (RL)~\cite{sutton1998reinforcement} an agent interacts with an unknown environment to determine an optimal policy that maximises total expected return. These problems are formalised as Markov decision processes (MDPs). Formally, an MDP is defined as the tuple $\left\langle \mathcal{S}, \mathcal{A}, \mathcal{T}, \mathcal{R}, \gamma\right\rangle$ where $\mathcal{S}$ is the state space, $\mathcal{A}$ the action space, and $\mathcal{T}: \mathcal{S}\times \mathcal{A} \times \mathcal{S} \rightarrow [0,1]$ denotes the state transition density. Namely, when being in state $\bm{s}_{t} \in \mathcal{S}$ and applying an action $\bm{a}_{t}\in \mathcal{A}$, the agent transitions to $\bm{s}_{t+1} \sim \mathcal{T}(\bm{s}_{t+1}|\bm{s}_{t},\bm{a}_{t})$. The reward function $\mathcal{R}:\mathcal{S} \times \mathcal{A} \rightarrow \mathbb{R}$ quantifies the agent's performance and $\gamma$ is the discount factor that trades off current and future rewards. The goal is to implement a policy that maximises total discounted rewards, i.e., $\pi^\star(\bm a|\bm s) = \argmax_\pi V^\pi(\bm s)$,  where  $V^{\pi}(\bm{s}) = \mathbb{E}\Big[\sum_{t=0}^{\infty}\gamma^{t}\mathcal{R}(\bm{s}_{t},\bm{a}_{t}) \Big]$.

\subsection{Single Agent Soft Q-Learning}  
A way to constrain the behaviour of an agent is to modify the feasibility set of allowable policies. This can be achieved by introducing a constraint, such as a KL between two policy distributions, to the reinforcement learning objective. Such an approach has been already used within single agent reinforcement learning. For example, soft Q-learning has been used to reduce the overestimation problem of standard Q-learning~\cite{fox2016taming} and for building flexible energy-based policies in continuous domains~\cite{haarnoja2017reinforcement}. Most of these approaches modify the standard objective of reinforcement learning to
\begin{align}&\max_{\pi} \mathbb{E}\Big[\sum_{t=0}^{\infty} \gamma^{t}\mathcal{R}(\bm{s}_{t},\bm{a}_{t}) \Big] \nonumber \\
&\text{s.t.} \ \  \sum_{t=0}^{\infty} \mathbb{E}\left[\gamma^{t}\text{KL}\left(\pi(\bm{a}_{t}|\bm{s}_{t})||\rho(\bm{a}_{t}|\bm{s}_{t})\right)\right] \le C, \label{eq:softQlearning}
\end{align}
where $C$ is the amount of bits (or nats if using the natural logarithm) measured by the KL divergence that the policy $\pi$ is allowed to deviate from a reference policy $\rho$. The expectation operation is over state-action trajectories. 

To solve the above constrained problem, one typically introduces a Lagrange multiplier, $\beta$, and rewrites an equivalent unconstrained problem
\begin{align*}
\mathcal{V}^{\star}(\bm{s}) &= \max_{\pi} \mathbb{E}\bigg[\sum_{t=0}^{\infty} \gamma^t\bigg(\mathcal{R}(\bm{s}_{t},\bm{a}_{t})- \frac{1}{\beta}\log \frac{\pi(\bm{a}_{t}|\bm{s}_{t})}{\rho(\bm{a}_{t}|\bm{s}_{t})}\bigg)\bigg].
\end{align*}

To derive an algorithm for solving the above, one comes to recognise that $\mathcal{V}^{\star}(\bm{s})$ also satisfies a recursion similar to that introduced by the Bellman equations~\cite{puterman1994markov}. Additionally, the optimal policy can be written in closed form as
\begin{equation*}
	\pi^{\star}(\bm{a}|\bm{s}) = \frac{\rho(\bm{a}|\bm{s}) e^{\beta \mathcal{Q}^{\star}(\bm{s},\bm{a})}}{\sum_{\bm{a} \in \mathcal{A}}\rho(\bm{a}|\bm{s})e^{\beta \mathcal{Q}^{\star}(\bm{s},\bm{a})}},
\end{equation*}
where $\mathcal{Q}^{\star}(\bm{s},\bm{a}) := \mathcal{R}(\bm{s},\bm{a}) + \sum_{\bm{s}^{\prime} \in \mathcal{S}} \mathcal{T}(\bm{s}^{\prime}|\bm{s},\bm{a}) \mathcal{V}^{\star}(\bm{s}^{\prime})$, and $\bm{s}^{\prime} \in \mathcal{S}$. Notice that the above represents a generalisation of standard RL settings, where $\beta \rightarrow \infty $ corresponds to a perfectly rational valuation ($\mathcal V_{\beta \rightarrow \infty}^\star (\bm s)=\max_\pi V^\pi(\bm s)$), while for $\beta \rightarrow 0 $ we recover the valuation under $\rho$ ($\mathcal V_{\beta \rightarrow 0}^\star (\bm s)=V^\rho(\bm s)$). Clearly, we can generate a continuum of policies between the reference and the perfectly rational policy that maximises the expected reward by tuning the choice of $\beta$ as detailed in~\cite{leibfried2017information}.

\subsection{Two-Player Stochastic Games}
In two-player stochastic games~\cite{shapley1953stochastic,littman1994markov}, two agents, that we denote as the player and the opponent, are interacting in an environment. Each agent executes a policy that we write as $\pi_{\text{pl}}$ and 
$\pi_{\text{op}}$. At some time step $t$, the player chooses an action $\bm{a}^{\text{pl}}_{t} \sim \pi_{\text{pl}}\big(\bm{a}^{\text{pl}}_{t}|\bm{s}_{t}\big)$, while the opponent picks $\bm{a}^{\text{op}}_{t} \sim \pi_{\text{op}}\big(\bm{a}^{\text{op}}_{t}|\bm{s}_{t}\big)$. Accordingly, the environment transitions to a successor state $\bm{s}_{t+1} \sim \mathcal{T}_{\mathcal{G}}\big(\bm{s}_{t+1}|\bm{s}_{t},\bm{a}^{\text{pl}}_{t},\bm{a}^{\text{op}}_{t}\big)$, where $\mathcal{T}_{\mathcal{G}}$ denotes the joint transition model for the game. After transitioning to a new state, both agents receive a particular reward depending on the type of game  considered. In team games, both the player and the opponent maximise the same reward function $\mathcal{R}_{\mathcal{G}}\big(\bm{s}_{t},\bm{a}^{\text{pl}}_{t},\bm{a}^{\text{op}}_{t}\big)$. For zero-sum games, the player seeks to maximise $\mathcal{R}_{\mathcal{G}}$, whereas the opponent seeks to find a minimum. We write the policy dependent value as $V^{\pi_{\text{pl}}\pi_{\text{op}}}(\bm s) := \mathbb E \big[ \sum_{t=0}^\infty \gamma^t \mathcal{R}_{\mathcal{G}}\big(\bm{s}_{t},\bm{a}^{\text{pl}}_{t},\bm{a}^{\text{op}}_{t}\big) \big]$ where, in contrast to the one-player setting, the expectation is over state and \emph{joint-action} trajectories.

In stochastic games it is common to assume perfect rationality for both agents i.e., in the case of a zero-sum game the player computes the optimal value of state $\bm s$ as $V^{\star\text{pl}}_{\text{zs}}(\bm s) = \max_{\pi_{\text{pl}}} \min_{\pi_{\text{op}}} V^{\pi_{\text{pl}}\pi_{\text{op}}}(\bm s)$, while the opponent  as $V^{\star \text{op}}_{\text{zs}}(\bm s) =  \min_{\pi_{\text{op}}} \max_{\pi_{\text{pl}}} V^{\pi_{\text{pl}}\pi_{\text{op}}}(\bm s)$. Similarly, in team games the optimal value for the player is  $V^{\star \text{pl}}_{\text{tg}}(\bm s) = \max_{\pi_{\text{pl}}} \max_{\pi_{\text{op}}} V^{\pi_{\text{pl}}\pi_{\text{op}}}(\bm s)$ and for the opponent $V^{\star\text{op}}_{\text{tg}}(\bm s) =  \max_{\pi_{\text{opp}}} \max_{\pi_{\text{pl}}} V^{\pi_{\text{pl}}\pi_{\text{op}}}(\bm s)$. Although it is straightforward to show that for team games $V^{\star \text{op}}_{\text{tg}}(\bm s)  = V^{\star\text{pl}}_{\text{tg}}(\bm s) $, an important classic result in game theory  -- the minimax theorem \cite{osborne1994course} -- states that for zero-sum games $V^{\star \text{op}}_{\text{zs}}(\bm s)  = V^{\star \text{pl}}_{\text{zs}}(\bm s) $, i.e both team and zero sum games have a unique value.

Importantly, in complex games with large state-spaces the $\max$ and the $\min $ operations over all available policies are extremely difficult to compute. Humans and suboptimal agents seek to approximate these operations as best they can but never fully do so due to the lack of computational resources~\cite{ortega2016human}, approximations and introduced biases~\cite{lieder2012burn}.   
This limits the applicability of SG when interacting with suboptimal entities, e.g., in computer games when competing against human players. We next provide the first extension, to the best of our knowledge, of soft Q-learning to SGs and show how our framework can be used within the context of balancing the game's difficulty.


\section{Two-Player Soft Q-Learning}\label{sec:two-player_stochastic_games_with_KL}
To enable soft Q-learning in two-player games we introduce two KL constraints that allow us to separately control the performance of both agents. In particular, we incorporate a constraint similar to \eqref{eq:softQlearning} into the objective function for each agent and apply the method of Lagrange multipliers 
\begin{align}
\label{eq:policy_dependent_valueBR}
& \mathcal{V}^{\pi_{\text{pl}} \pi_{\text{op}}}(\bm s) = \mathbb{E}\bigg[\sum_{t=0}^{\infty} \gamma^{t} \bigg(\mathcal{R}_{\mathcal{G}}\Big(\bm{s}_{t},\bm{a}_{t}^{\text{pl}},\bm{a}_{t}^{\text{op}}\Big) \\ \nonumber
&-\frac{1}{\beta_{\text{pl}}} \log \frac{\pi_{\text{pl}}(\bm{a}_{t}^{\text{pl}}|\bm{s}_{t})}{\rho_{\text{pl}}(\bm{a}_{t}^{\text{pl}}|\bm{s}_{t})} - \frac{1}{\beta_{\text{op}}} \log \frac{\pi_{\text{op}}(\bm{a}_{t}^{\text{op}}|\bm{s}_{t})}{\rho_{\text{op}}(\bm{a}_{t}^{\text{op}}|\bm{s}_{t})}
\bigg)  
\bigg],
\end{align}
where the expectation is over joint-action trajectories, $\frac{1}{\beta_{\text{pl}}} \log \frac{\pi_{\text{pl}}(\bm{a}_{t}^{\text{pl}}|\bm{s}_{t})}{\rho_{\text{pl}}(\bm{a}_{t}^{\text{pl}}|\bm{s}_{t})}$ is the information cost for the player (that turns into a KL divergence with the expectation operator), and $\frac{1}{\beta_{\text{op}}} \log \frac{\pi_{\text{op}}(\bm{a}_{t}^{\text{op}}|\bm{s}_{t})}{\rho_{\text{op}}(\bm{a}_{t}^{\text{op}}|\bm{s}_{t})}$  is the information cost for the opponent.  The Lagrange multipliers $\beta_{\text{pl}}$ and $\beta_{\text{op}}$ are tuneable  parameters that we can vary at will. The distributions $\rho_{\text{pl}}(\bm{a}_{t}^{\text{pl}}|\bm{s}_{t})$ and $\rho_{\text{op}}(\bm{a}_{t}^{\text{op}}|\bm{s}_{t})$ are the arbitrary reference policies that we assume to be uniform\footnote{Please note considering other reference policies is left as an interesting direction for future work.}. Using the above, the player and the opponent compute optimal soft-value of a state $\bm s$ using 
\begin{align}
\label{eq:optimization_freeenergy_player}
&\mathcal{V}^{\star}_{\text{pl}}(\bm{s}) = \max_{\pi_{\text{pl}}} \ext_{\pi_{\text{op}}} \mathcal{V}^{\pi_{\text{pl}} \pi_{\text{op}}}(\bm s), \ \mathcal{V}^{\star}_{\text{op}}(\bm{s}) =  \ext_{\pi_{\text{op}}} \max_{\pi_{\text{pl}}} \mathcal{V}^{\pi_{\text{pl}} \pi_{\text{op}}}(\bm s).
\end{align}
We define the extremum operator $\ext$ to correspond to a $\max$ in the case of positive $\beta_{\text{op}}$ and to a $\min$ in the case of negative $\beta_{\text{op}}$.  

It is clear that this novel formulation of the optimisation problems in Equations~\eqref{eq:optimization_freeenergy_player} generalise to cover both zero-sum and team games depending on the choice of $\beta_{\text{op}}$. By fixing $\beta_{\text{pl}} \rightarrow \infty$ and  setting $\beta_{\text{op}} \rightarrow -\infty$ or  $\beta_{\text{op}} \rightarrow \infty$ we recover, respectively, a zero-sum or a team game with perfectly rational agents. For $\beta_{\text{op}} \rightarrow 0$ we derive a game by which the opponent simply employs policy $\rho_{\text{op}}$. For finite values of $\beta_{\text{op}}$, we obtain a continuum of opponents with bounded performance ranging from fully adversarial to fully collaborative including a random policy. It is important to note, as we will show later, that the analytical form of the optimal policies that solve \eqref{eq:optimization_freeenergy_player} are independent of the extremum operator and only depend on the parameters $\beta_{\text{pl}}$ and $\beta_{\text{op}}$.

\subsection{Unique Value for Two-Player Soft Q-Learning}
In this section we show that the equations in~\eqref{eq:optimization_freeenergy_player} are equivalent, $\mathcal V_{\text{pl}}^\star (\bm s) =  \mathcal V_{\text{op}}^\star (\bm s)$, for any $\beta_\text{pl}$ and $\beta_\text{op}$, that is, our two player soft Q-learning exhibit a unique value. 
 
We start by defining the free energy operator as 
\begin{align}\label{eq:f}
f(\pi_{\text{pl}}&,\pi_{\text{op}},\bm{s},  \mathcal{V})   := \mathbb{E}_{\pi_{\text{pl}},\pi_{\text{op}}}\bigg[ \mathcal{R}_{\mathcal{G}}(\bm{s},\bm{a}^{\text{pl}},\bm{a}^{\text{op}})  + \gamma \mathbb{E}_{\mathcal{T}_{\mathcal{G}}}\left[\mathcal{V}(\bm s') \right] \nonumber \\
& -\frac{1}{\beta_{\text{pl}}} \log \frac{\pi_{\text{pl}}(\bm{a}^{\text{pl}}|\bm{s})}{\rho_{\text{pl}}(\bm{a}^{\text{pl}}|\bm{s})} -\frac{1}{\beta_{\text{opp}}} \log \frac{\pi_{\text{op}}(\bm{a}^{\text{op}}|\bm{s})}{\rho_{\text{op}}(\bm{a}^{\text{op}}|\bm{s})} \bigg]
\end{align}
for an arbitrary free energy vector $\mathcal V$. 
Then the Bellman-like operators for both the player and the opponent can be expressed as:
\begin{align}
\label{Eq:Bla}
\mathcal B_{\text{pl}} \mathcal{V}(\bm s) &= \max_{\pi_{\text{pl}}} \ext_{\pi_{\text{op}}} f(\pi_{\text{pl}},\pi_{\text{op}},\bm{s},\mathcal{V}) \\\nonumber
\mathcal B_{\text{op}} \mathcal{V}  (\bm s) &=  \ext_{\pi_{\text{op}}} \max_{\pi_{\text{pl}}} f(\pi_{\text{pl}},\pi_{\text{op}},\bm{s},\mathcal{V}).
\end{align}
\textbf{Proof Sketch:} For proving our main results, summarised in Theorem~\ref{col:optimal_freeenergies_are_equal}, we commence by showing that the equations in~\eqref{eq:optimization_freeenergy_player} are equivalent. This is achieved by showing that the two operators in Equation~\eqref{Eq:Bla} are in fact equivalent, see Lemma~\ref{lem:equality_operators}. Proving these operators to be contractions converging to a unique fixed point (see Theorem~\ref{theo:contraction}), we conclude that $\mathcal V_{\text{pl}}^\star (\bm s) =  \mathcal V_{\text{op}}^\star (\bm s)$  (see Appendix for proof details).

\begin{lemma}\label{lem:equality_operators}
For any $\beta_{\text{pl}} \in \mathbb R$ and  $\beta_{\text{op}} \in \mathbb R$, and arbitrary free energy vector $\mathcal V$, then
$\mathcal B_{\text{pl}} \mathcal{V} (\bm s) =  \mathcal B_{\text{op}} \mathcal{V}(\bm s)$.
\end{lemma}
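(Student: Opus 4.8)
The plan is to reduce both operators to the optimisation of a single scalar objective over the product of the two policy simplices, and then to interchange the order of the two optimisations by a saddle-point argument. First I would absorb the one-step look-ahead into a joint action-value $Q(\bm s,\bm a^{\text{pl}},\bm a^{\text{op}}) := \mathcal{R}_{\mathcal{G}}(\bm s,\bm a^{\text{pl}},\bm a^{\text{op}}) + \gamma\,\mathbb{E}_{\mathcal{T}_{\mathcal{G}}}[\mathcal{V}(\bm s')]$, so that $f(\pi_{\text{pl}},\pi_{\text{op}},\bm s,\mathcal V)$ becomes the bilinear coupling $\sum_{\bm a^{\text{pl}},\bm a^{\text{op}}}\pi_{\text{pl}}\pi_{\text{op}}\,Q$ minus the two information costs. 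The decisive structural observation is that the two Lagrange terms are \emph{separable}: one depends only on $\pi_{\text{pl}}$ and the other only on $\pi_{\text{op}}$, while all coupling between the agents enters through the term that is bilinear, hence linear in each argument separately.

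Second, I would record the Gibbs free-energy variational identity: for a fixed linear payoff $g$, $\ext_{\pi}\big[\langle\pi,g\rangle-\tfrac{1}{\beta}\,\text{KL}(\pi\|\rho)\big]=\tfrac{1}{\beta}\log\sum_{a}\rho(a)e^{\beta g(a)}$, with extremiser the normalised Gibbs policy $\pi^{\star}(a)\propto\rho(a)e^{\beta g(a)}$. The key point is that this identity holds for \emph{both} signs of $\beta$ precisely because $\ext$ is a $\max$ when $\beta>0$ and a $\min$ when $\beta<0$: the map $\pi\mapsto-\tfrac{1}{\beta}\text{KL}(\pi\|\rho)$ is concave for $\beta>0$ and convex for $\beta<0$, so in each regime the operator is matched to the curvature and the stationary Gibbs policy is the global extremiser. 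Applying this to the inner problem of each operator -- the $\ext_{\pi_{\text{op}}}$ inside $\mathcal B_{\text{pl}}$ and the $\max_{\pi_{\text{pl}}}$ inside $\mathcal B_{\text{op}}$ -- eliminates one variable in closed form and, crucially, produces the \emph{same} pair of coupled Gibbs policies in both cases.

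Third, I would establish the interchange of the two operators. When $\beta_{\text{op}}>0$ the operator $\ext$ is itself a $\max$, so $\mathcal B_{\text{pl}}\mathcal V$ and $\mathcal B_{\text{op}}\mathcal V$ are both a joint maximisation of $f$ over the product simplex and trivially agree. When $\beta_{\text{op}}<0$, $\ext$ is a $\min$, and $f$ is concave in $\pi_{\text{pl}}$ and convex in $\pi_{\text{op}}$ over the two simplices, which are convex and compact; Sion's minimax theorem then gives $\max_{\pi_{\text{pl}}}\min_{\pi_{\text{op}}}f=\min_{\pi_{\text{op}}}\max_{\pi_{\text{pl}}}f$, i.e.\ $\mathcal B_{\text{pl}}\mathcal V(\bm s)=\mathcal B_{\text{op}}\mathcal V(\bm s)$. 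Equivalently, the first-order conditions of the two outer problems coincide and read $\pi_{\text{pl}}^{\star}\propto\rho_{\text{pl}}\exp(\beta_{\text{pl}}\sum_{\bm a^{\text{op}}}\pi_{\text{op}}^{\star}Q)$ and $\pi_{\text{op}}^{\star}\propto\rho_{\text{op}}\exp(\beta_{\text{op}}\sum_{\bm a^{\text{pl}}}\pi_{\text{pl}}^{\star}Q)$; this single coupled fixed point is a genuine saddle of $f$, and both orders of optimisation evaluate to $f(\pi_{\text{pl}}^{\star},\pi_{\text{op}}^{\star})$.

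The step I expect to be the main obstacle is exactly this interchange in the adversarial regime $\beta_{\text{op}}<0$: one has to verify that the curvature induced by the signs of $\beta_{\text{pl}}$ and $\beta_{\text{op}}$ really yields a concave--convex problem matched to the $(\max,\min)$ pair, so that a minimax theorem applies and the common Gibbs fixed point is the \emph{global} saddle rather than a spurious interior critical point (in particular, that the maximising player's problem stays concave, which ties the clean argument to $\beta_{\text{pl}}>0$). The remaining care is routine: confirming compactness of the simplices and well-posedness of the Gibbs normalisation, and checking the limiting cases $\beta\to0,\pm\infty$ by continuity.
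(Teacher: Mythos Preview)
Your approach is essentially the same as the paper's: for $\beta_{\text{op}}>0$ both operators reduce to a joint $\max$ and trivially commute, while for $\beta_{\text{op}}<0$ the separable KL terms make $f$ concave in $\pi_{\text{pl}}$ and convex in $\pi_{\text{op}}$ over compact simplices, so a minimax theorem (the paper invokes it directly, you name Sion) gives $\max\min=\min\max$. Your additional Gibbs-identity and coupled-fixed-point material is not needed for this lemma itself (it reappears later in the paper when deriving the bounded-optimal policies), and you are right to flag that both arguments tacitly rely on $\beta_{\text{pl}}>0$ for concavity despite the lemma's stated generality.
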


Due to Lemma~\ref{lem:equality_operators}, we can define the generic operator $\mathcal B \mathcal{V}(\bm s):= \mathcal B_{\text{pl}} \mathcal{V} (\bm s) =  \mathcal B_{\text{op}} \mathcal{V} (\bm s)$. Then, for this generic operator, we can prove the following. 
\begin{theorem}[Contraction]\label{theo:contraction}
For $\beta_{\text{op}} \in \mathbb R$ and $\beta_{\text{pl}} \in \mathbb R$, the operator $\mathcal{B}$ is an $L_\infty$-norm contraction map $\| \mathcal{B} \mathcal{V} - \mathcal{B} \mathcal{\bar{V}} \|_\infty  \le \gamma \| \mathcal{V} - \mathcal{\bar{V}} \|_\infty$, where $\mathcal{V}$ and $\mathcal{\bar{V}}$ are two arbitrary free energy vectors and $\gamma$ is the discount factor.
\end{theorem}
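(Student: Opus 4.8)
The plan is to reduce the claim to the elementary fact that both the $\max$ and the $\min$ operators are non-expansive in the supremum norm, and then to exploit the additive structure of the free energy operator $f$ in~\eqref{eq:f}. By Lemma~\ref{lem:equality_operators} it suffices to bound a single operator, so I will work with the representation $\mathcal B \mathcal V(\bm s) = \max_{\pi_{\text{pl}}} \ext_{\pi_{\text{op}}} f(\pi_{\text{pl}},\pi_{\text{op}},\bm s,\mathcal V)$. The crucial observation is that $f$ splits as a term depending only on the policies (the reward and the two information costs) plus the value-dependent term $\gamma\, \mathbb E_{\pi_{\text{pl}},\pi_{\text{op}}}\mathbb E_{\mathcal T_{\mathcal G}}[\mathcal V(\bm s')]$, so the policy-only part cancels when we compare $f(\cdot,\mathcal V)$ with $f(\cdot,\bar{\mathcal V})$.

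First I would record the standard non-expansiveness estimates: for any two real-valued functions $h_1,h_2$ on a common domain, $|\sup_x h_1(x) - \sup_x h_2(x)| \le \sup_x |h_1(x) - h_2(x)|$, and likewise for $\inf$. Since $\ext$ is either $\max$ or $\min$ depending on $\operatorname{sign}(\beta_{\text{op}})$, either case is covered by one of these inequalities. Applying the $\max$ estimate in $\pi_{\text{pl}}$ and then the $\ext$ estimate in $\pi_{\text{op}}$ peels off both optimisations, yielding
\begin{align*}
\big| \mathcal B \mathcal V(\bm s) - \mathcal B \bar{\mathcal V}(\bm s) \big| \le \sup_{\pi_{\text{pl}},\pi_{\text{op}}} \big| f(\pi_{\text{pl}},\pi_{\text{op}},\bm s,\mathcal V) - f(\pi_{\text{pl}},\pi_{\text{op}},\bm s,\bar{\mathcal V}) \big|.
\end{align*}

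Next, substituting the definition of $f$ and cancelling the reward and information-cost terms leaves only
\begin{align*}
f(\pi_{\text{pl}},\pi_{\text{op}},\bm s,\mathcal V) - f(\pi_{\text{pl}},\pi_{\text{op}},\bm s,\bar{\mathcal V}) = \gamma\, \mathbb E_{\pi_{\text{pl}},\pi_{\text{op}}}\mathbb E_{\mathcal T_{\mathcal G}}\big[ \mathcal V(\bm s') - \bar{\mathcal V}(\bm s') \big].
\end{align*}
Bounding the integrand pointwise by $\|\mathcal V - \bar{\mathcal V}\|_\infty$ and using that $\pi_{\text{pl}}$, $\pi_{\text{op}}$, and $\mathcal T_{\mathcal G}$ are probability distributions (so the nested expectation of a constant returns that constant) gives $|f(\cdot,\mathcal V)-f(\cdot,\bar{\mathcal V})|\le \gamma \|\mathcal V-\bar{\mathcal V}\|_\infty$ uniformly in the policies. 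Taking the supremum over $\bm s$ on the left then yields the claimed contraction with modulus $\gamma$.

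The only point requiring care is the handling of the sign-dependent $\ext$ operator: one must verify that the non-expansiveness bound is valid for $\min$ as well as $\max$, and that it composes correctly through the nested optimisation regardless of which operator occupies the inner slot. Because both $\max$ and $\min$ satisfy the same supremum-norm non-expansiveness inequality, this causes no difficulty, and the potential unboundedness of the individual information-cost terms is irrelevant, since they cancel before any norm is taken.
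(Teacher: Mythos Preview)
Your argument is correct and matches the paper's approach: both establish the non-expansiveness of $\max$ and $\min$ in sup-norm (recorded in the paper as two propositions and Corollary~\ref{col:extremums}), peel off the nested optimisations, cancel the policy-dependent terms (reward and information costs), and bound the remaining $\gamma\,\mathbb E[\mathcal V-\bar{\mathcal V}]$ term by $\gamma\|\mathcal V-\bar{\mathcal V}\|_\infty$. The only cosmetic difference is that the paper first evaluates the inner $\ext_{\pi_{\text{op}}}$ in closed form (obtaining the marginal $\mathcal Q_{\text{pl}}$) before applying non-expansiveness to the outer $\max_{\pi_{\text{pl}}}$, whereas you apply non-expansiveness directly at both levels of the nested optimisation; your route is slightly more streamlined but substantively identical.
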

Note that our reward is policy dependent (in the information cost) and, therefore, Theorem~\ref{theo:contraction} is not a direct consequence of known results \cite{littman1996generalized}, which assume that these rewards are policy independent. Using the above and the Banach's fixed point theorem~\cite{puterman1994markov}, we obtain the following corollary.
\begin{corollary}[Unique fixed point]\label{col:unique_fixed_point}
 The contraction mapping $\mathcal{B}$ exhibits a unique fixed-point $\mathcal{V}^{\star}$ such that $\mathcal{B}\mathcal{V}^{\star} = \mathcal{V}^{\star}$.
\end{corollary}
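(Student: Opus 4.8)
The plan is to invoke Banach's fixed-point theorem, for which Theorem~\ref{theo:contraction} already supplies the crucial contraction property, so that the only remaining work is to fix the underlying space and verify it is complete. First I would specify the space on which $\mathcal{B}$ acts: the set of bounded free energy vectors $\mathcal{V}$ equipped with the $L_\infty$-norm $\|\mathcal{V}\|_\infty = \sup_{\bm{s}} |\mathcal{V}(\bm{s})|$. For a finite state space this is just $(\mathbb{R}^{|\mathcal{S}|}, \|\cdot\|_\infty)$, which is complete; more generally the space of bounded real-valued functions on $\mathcal{S}$ under the sup-norm is a Banach space, so in either case we have a complete metric space. I would also confirm that $\mathcal{B}$ is a self-map of this space, i.e.\ that $\mathcal{B}\mathcal{V}$ remains bounded whenever $\mathcal{V}$ is, which follows from the boundedness of the rewards $\mathcal{R}_{\mathcal{G}}$ together with the finiteness of the information-cost terms under the (uniform) reference policies.

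For existence I would run the standard Picard iteration: starting from an arbitrary $\mathcal{V}_{0}$, define $\mathcal{V}_{n+1} = \mathcal{B}\mathcal{V}_{n}$. Applying Theorem~\ref{theo:contraction} repeatedly yields $\|\mathcal{V}_{n+1} - \mathcal{V}_{n}\|_\infty \le \gamma^{n} \|\mathcal{V}_{1} - \mathcal{V}_{0}\|_\infty$, and a triangle-inequality estimate over the telescoping differences shows that $\{\mathcal{V}_{n}\}$ is Cauchy, since $\gamma < 1$ makes the geometric tail summable. By completeness this sequence converges to some limit $\mathcal{V}^{\star}$; and because a contraction is Lipschitz, hence continuous, I can pass to the limit in the recursion $\mathcal{V}_{n+1} = \mathcal{B}\mathcal{V}_{n}$ to conclude $\mathcal{B}\mathcal{V}^{\star} = \mathcal{V}^{\star}$.

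For uniqueness I would suppose $\mathcal{V}^{\star}$ and $\bar{\mathcal{V}}$ are both fixed points and apply the contraction bound directly: $\|\mathcal{V}^{\star} - \bar{\mathcal{V}}\|_\infty = \|\mathcal{B}\mathcal{V}^{\star} - \mathcal{B}\bar{\mathcal{V}}\|_\infty \le \gamma \|\mathcal{V}^{\star} - \bar{\mathcal{V}}\|_\infty$. Because $\gamma < 1$, this inequality can only hold when $\|\mathcal{V}^{\star} - \bar{\mathcal{V}}\|_\infty = 0$, so the two fixed points coincide.

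I do not expect a genuine obstacle, as this corollary is a textbook consequence of the contraction established in Theorem~\ref{theo:contraction}. The only point requiring a moment's care is verifying completeness of the chosen function space and the self-map property of $\mathcal{B}$ on it; both reduce to the boundedness of $\mathcal{R}_{\mathcal{G}}$ and the finiteness of the KL/information-cost terms under the uniform reference policies, and neither presents any real difficulty.
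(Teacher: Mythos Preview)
Your proposal is correct and matches the paper's approach exactly: the paper simply states that the corollary follows from Theorem~\ref{theo:contraction} together with Banach's fixed-point theorem (citing Puterman), without spelling out any of the details you provide. Your write-up is thus a more explicit version of the same argument.
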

Due to Lemma~\ref{lem:equality_operators}, Theorem~\ref{theo:contraction} and Corollary~\ref{col:unique_fixed_point}, we arrive at the following.  
\begin{corollary}\label{col:optimal_freeenergies_are_equal}
	Two-player stochastic games with soft Q-learning have a unique value, i.e. $\mathcal V_{\text{pl}}^\star (s) =  \mathcal V_{\text{op}}^\star (s)$.
\end{corollary}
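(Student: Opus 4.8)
The plan is to relate the trajectory-level quantities $\mathcal{V}_{\text{pl}}^\star$ and $\mathcal{V}_{\text{op}}^\star$ defined in~\eqref{eq:optimization_freeenergy_player} to the fixed-point theory already assembled. Concretely, I would show that $\mathcal{V}_{\text{pl}}^\star$ is a fixed point of the player operator $\mathcal{B}_{\text{pl}}$ and that $\mathcal{V}_{\text{op}}^\star$ is a fixed point of the opponent operator $\mathcal{B}_{\text{op}}$; then, because Lemma~\ref{lem:equality_operators} identifies $\mathcal{B}_{\text{pl}} = \mathcal{B}_{\text{op}} = \mathcal{B}$ and Corollary~\ref{col:unique_fixed_point} gives $\mathcal{B}$ a unique fixed point, both values are forced to coincide with that single object.

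The substantive step is establishing the one-step recursions $\mathcal{V}_{\text{pl}}^\star = \mathcal{B}_{\text{pl}}\mathcal{V}_{\text{pl}}^\star$ and $\mathcal{V}_{\text{op}}^\star = \mathcal{B}_{\text{op}}\mathcal{V}_{\text{op}}^\star$ by a principle-of-optimality argument. Starting from $\mathcal{V}_{\text{pl}}^\star(\bm{s}) = \max_{\pi_{\text{pl}}}\ext_{\pi_{\text{op}}}\mathcal{V}^{\pi_{\text{pl}}\pi_{\text{op}}}(\bm{s})$, I would decompose each policy into its first-stage action distribution and its continuation, isolate the $t=0$ summand $\mathbb{E}_{\pi_{\text{pl}},\pi_{\text{op}}}[\mathcal{R}_{\mathcal{G}} - \frac{1}{\beta_{\text{pl}}}\log(\pi_{\text{pl}}/\rho_{\text{pl}}) - \frac{1}{\beta_{\text{op}}}\log(\pi_{\text{op}}/\rho_{\text{op}})]$, and recognise the discounted tail, taken after the transition $\bm{s}' \sim \mathcal{T}_{\mathcal{G}}$, as $\gamma$ times the same nested optimisation evaluated at $\bm{s}'$. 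Since $\mathcal{R}_{\mathcal{G}}$, $\mathcal{T}_{\mathcal{G}}$ and the reference policies are stationary, the continuation problem from $\bm{s}'$ is again $\max_{\pi_{\text{pl}}}\ext_{\pi_{\text{op}}}\mathcal{V}^{\pi_{\text{pl}}\pi_{\text{op}}}(\bm{s}') = \mathcal{V}_{\text{pl}}^\star(\bm{s}')$, and the pieces reassemble exactly into $\max_{\pi_{\text{pl}}}\ext_{\pi_{\text{op}}} f(\pi_{\text{pl}},\pi_{\text{op}},\bm{s},\mathcal{V}_{\text{pl}}^\star) = \mathcal{B}_{\text{pl}}\mathcal{V}_{\text{pl}}^\star(\bm{s})$ as given in~\eqref{eq:f} and~\eqref{Eq:Bla}. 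Repeating the argument with the order of the operators swapped gives $\mathcal{V}_{\text{op}}^\star = \mathcal{B}_{\text{op}}\mathcal{V}_{\text{op}}^\star$. Having both recursions, Lemma~\ref{lem:equality_operators} makes $\mathcal{V}_{\text{pl}}^\star$ and $\mathcal{V}_{\text{op}}^\star$ two fixed points of the single contraction $\mathcal{B}$ (Theorem~\ref{theo:contraction}), and Corollary~\ref{col:unique_fixed_point} collapses them to $\mathcal{V}_{\text{pl}}^\star = \mathcal{V}^\star = \mathcal{V}_{\text{op}}^\star$.

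I expect the principle-of-optimality step, rather than the closing bookkeeping, to be the main obstacle, and its delicate points are twofold. First, peeling off a single stage requires interchanging the nested $\max_{\pi_{\text{pl}}}\ext_{\pi_{\text{op}}}$ with the expectation over $\mathcal{T}_{\mathcal{G}}$; this is legitimate because the problem is Markov and stationary and the information cost is additively separable across time, so the continuation optimisers depend only on $\bm{s}'$ and not on the first-stage choice. Second, the argument must hold uniformly in the sign of $\beta_{\text{op}}$, where $\ext$ switches between $\max$ and $\min$; the clean way to handle this is to use the closed-form softmax-type solution of the inner single-stage optimisation, which, as the text notes, is independent of whether $\ext$ is a maximum or a minimum, so that the stagewise decomposition is agnostic to the sign. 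With the stationary decomposition in place, the existence of well-defined $\mathcal{V}_{\text{pl}}^\star$ and $\mathcal{V}_{\text{op}}^\star$ is itself underwritten by the same contraction, removing any apparent circularity.
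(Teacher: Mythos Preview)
Your proposal is correct and follows exactly the route the paper sketches: invoke Lemma~\ref{lem:equality_operators} to identify $\mathcal{B}_{\text{pl}}=\mathcal{B}_{\text{op}}=\mathcal{B}$, then Theorem~\ref{theo:contraction} and Corollary~\ref{col:unique_fixed_point} to force any two fixed points to coincide. The paper's own ``proof'' is the single sentence ``Due to Lemma~\ref{lem:equality_operators}, Theorem~\ref{theo:contraction} and Corollary~\ref{col:unique_fixed_point}, we arrive at the following,'' so your principle-of-optimality argument that the trajectory-level quantities $\mathcal{V}_{\text{pl}}^\star$ and $\mathcal{V}_{\text{op}}^\star$ from~\eqref{eq:optimization_freeenergy_player} actually satisfy the one-step recursions $\mathcal{B}_{\text{pl}}\mathcal{V}_{\text{pl}}^\star=\mathcal{V}_{\text{pl}}^\star$ and $\mathcal{B}_{\text{op}}\mathcal{V}_{\text{op}}^\star=\mathcal{V}_{\text{op}}^\star$ fills in precisely the step the paper leaves implicit.
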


\subsection{Bounded-Optimal Policies}  
Corollary~\ref{col:optimal_freeenergies_are_equal} allows us to exploit the fact that there exists one unique value to generate the policies for both agents. With this in mind, we next design an algorithm (similar in spirit to standard Q-Learning) that acquires tuneable policies. We start by defining a state-action value function, in resemblance to the Q-function, as
\begin{align*}
\mathcal{Q}^{\star}(\bm{s},\bm{a}^{\text{pl}},\bm{a}^{\text{op}})  & :=  \mathcal{R}_{\mathcal{G}}(\bm{s}, \bm{a}^{\text{pl}},\bm{a}^{\text{op}})   + \gamma \mathbb E_{\mathcal{T}_{\mathcal{G}}} \left[ \mathcal{V}^{\star}(\bm{s}^{\prime})\right].
\end{align*}
For action selection, neither the player nor the opponent can directly use $\mathcal Q^\star$ as it depends on the action of the other agent, which is unknown a priori. Instead, it can be shown that agents must first compute the certainty equivalent by marginalising  $\mathcal Q^\star$ as 
\begin{align*} 	
	&\mathcal{Q}^{\star}_{\text{pl}}  (\bm{s},  \bm{a}^{\text{pl}}) := \frac{1}{\beta_{\text{op}}} \log \sum_{\bm{a}^{\text{op}}} \rho_{\text{op}}(\bm{a}^{\text{op}}|\bm{s}) \exp \left(\beta_{\text{op}} \mathcal{Q}^{\star}(\bm{s},\bm{a}^{\text{pl}},\bm{a}^{\text{op}}) \right)\\ \nonumber 
	&\mathcal{Q}^{\star}_{\text{op}}  (\bm{s}, \bm{a}^{\text{op}}) := \frac{1}{\beta_{\text{pl}}} \log \sum_{\bm{a}^{\text{pl}}} \rho_{\text{pl}}(\bm{a}^{\text{pl}}|\bm{s}) \exp \left(\beta_{\text{pl}} \mathcal{Q}^{\star}(\bm{s},\bm{a}^{\text{pl}},\bm{a}^{\text{op}}) \right). \nonumber 
\end{align*}
With these definitions and using standard variational calculus, we obtain optimal policies for both the player and the opponent as\footnote{Note that, if we assume that the action space has low cardinality, $\mathcal{Q}^{\star}_{\text{pl}} (\bm{s},\bm{a}^{\text{pl}})$ and  $\mathcal{Q}^{\star}_{\text{op}}  (\bm{s},\bm{a}^{\text{op}})$ can be computed exactly.
}  
\begin{align}\label{eq:player_policy}
\pi^{\star}_{\text{pl}}(\bm{a}^{\text{pl}} |\bm{s}) &= \argmax_{\pi_{\text{pl}}} \ext_{\pi_{\text{op}}} f(\pi_{\text{pl}},\pi_{\text{op}},\bm{s},\mathcal{V}^\star)\nonumber \\
& = \frac{1}{Z_{\text{pl}}(\bm s)} \rho_{\text{pl}}(\bm{a}^{\text{pl}}|\bm{s})\exp\big(\beta_{\text{pl}} \mathcal{Q}^{\star}_{\text{pl}}(\bm{s},\bm{a}^{\text{pl}}) \big)  \\ \nonumber
 \pi^{\star}_{\text{op}}(\bm{a}^{\text{op}}|\bm{s}) &=  \argext_{\pi_{\text{op}}} \max_{\pi_{\text{pl}}} f(\pi_{\text{pl}},\pi_{\text{op}},\bm{s},\mathcal{V}^\star) \nonumber \\
& = \frac{1}{Z_{\text{op}}(\bm s)} \rho_{\text{op}}(\bm{a}^{\text{op}}|\bm{s})\exp\Big( \beta_{\text{op}} \mathcal{Q}^{\star}_{\text{op}}(\bm{s},\bm{a}^{\text{op}})\Big),  \nonumber
\end{align}
where $Z_{\text{pl}}(\bm s)$ and $Z_{\text{op}}(\bm s)$ are normalising functions which can be exactly computed when assuming small discrete action spaces. 

Hence, $\mathcal V^\star (\bm s)$ can be expressed in closed form by incorporating the optimal policies in Equation~\eqref{eq:optimization_freeenergy_player} giving
\begin{equation}\label{eq:bounded_rational_value}
	\mathcal V^\star(\bm s) =  \frac{1}{\beta_{\text{pl}}} \log \sum_{\bm{a}^{\text{pl}}} \rho_{\text{pl}}(\bm{a}^{\text{pl}}|\bm{s}) \exp \left(\beta_{\text{pl}} \mathcal{Q}^{\star}_{\text{pl}} (\bm{s},\bm{a}^{\text{pl}}) \right).
\end{equation}

As summarised in Algorithm~\ref{Alg:tabular}, we learn $\mathcal{Q}^{\star}(\bm{s},\bm{a}^{\text{pl}},\bm{a}^{\text{op}})$ by applying the following recursion rule:
\begin{align}
&\mathcal{Q}_{k+1}(\bm{s},\bm{a}^{\text{pl}},\bm{a}^{\text{op}}) = \mathcal{Q}_{k}(\bm{s},\bm{a}^{\text{pl}},\bm{a}^{\text{op}}) \label{eq:q_update} \\
&+  \alpha \big(\mathcal{R}_{\mathcal{G}} (\bm{s},\bm{a}^{\text{pl}},\bm{a}^{\text{op}}) + \gamma \mathcal{V}_{k}(\bm{s}^{\prime}) - \mathcal{Q}_{k} (\bm{s},\bm{a}^{\text{pl}},\bm{a}^{\text{op}}) \big). \nonumber
\end{align}
Here, $\alpha$ is the learning rate, $k$ the learning step, and $\mathcal{V}_{k}(\bm{s}^{\prime})$ is computed as in Equation~\eqref{eq:bounded_rational_value} using the current $\mathcal Q_k$ estimate.

\begin{algorithm}\caption{Two-Player Soft Q-Learning}
\begin{algorithmic}[1]
\STATE Given $\rho_{\text{pl}}$, $\rho_{\text{op}}$, $\beta_{\text{pl}}$, $\beta_{\text{op}}$, $\mathcal A$, $\mathcal S$ and learning rate $\alpha$
\STATE $\mathcal{Q} (\bm s, \bm a^{\text{pl}}, \bm a^{\text{op}}) \gets 0$ 
\WHILE {not converged}
\STATE Collect transition $\big(\bm s_t, \bm a^{\text{pl}}_t, \bm a^{\text{op}}_t, \mathcal{R}_{\text{t}}, \bm{s'}_t \big) $, where $\bm{a}^{\text{pl}} \sim \pi_{\text{pl}}$, $\bm{a}^{\text{op}} \sim \pi_{\text{op}}$ and $\mathcal R_t$ is the reward at time $t$. 
\STATE Update $\mathcal Q$ according to Equation~\eqref{eq:q_update}
\ENDWHILE

\STATE {\bf return} $\mathcal{Q} (\bm s, \bm a^{\text{pl}}, \bm a^{\text{op}})$

\end{algorithmic}
\label{Alg:tabular}
\end{algorithm}

\begin{figure*}[t]
\centering
\subfigure[High Rationality]{
	\label{fig:HighRation}
\includegraphics[height=0.18\textwidth,width=0.35\textwidth, trim={0 0 0 0}]{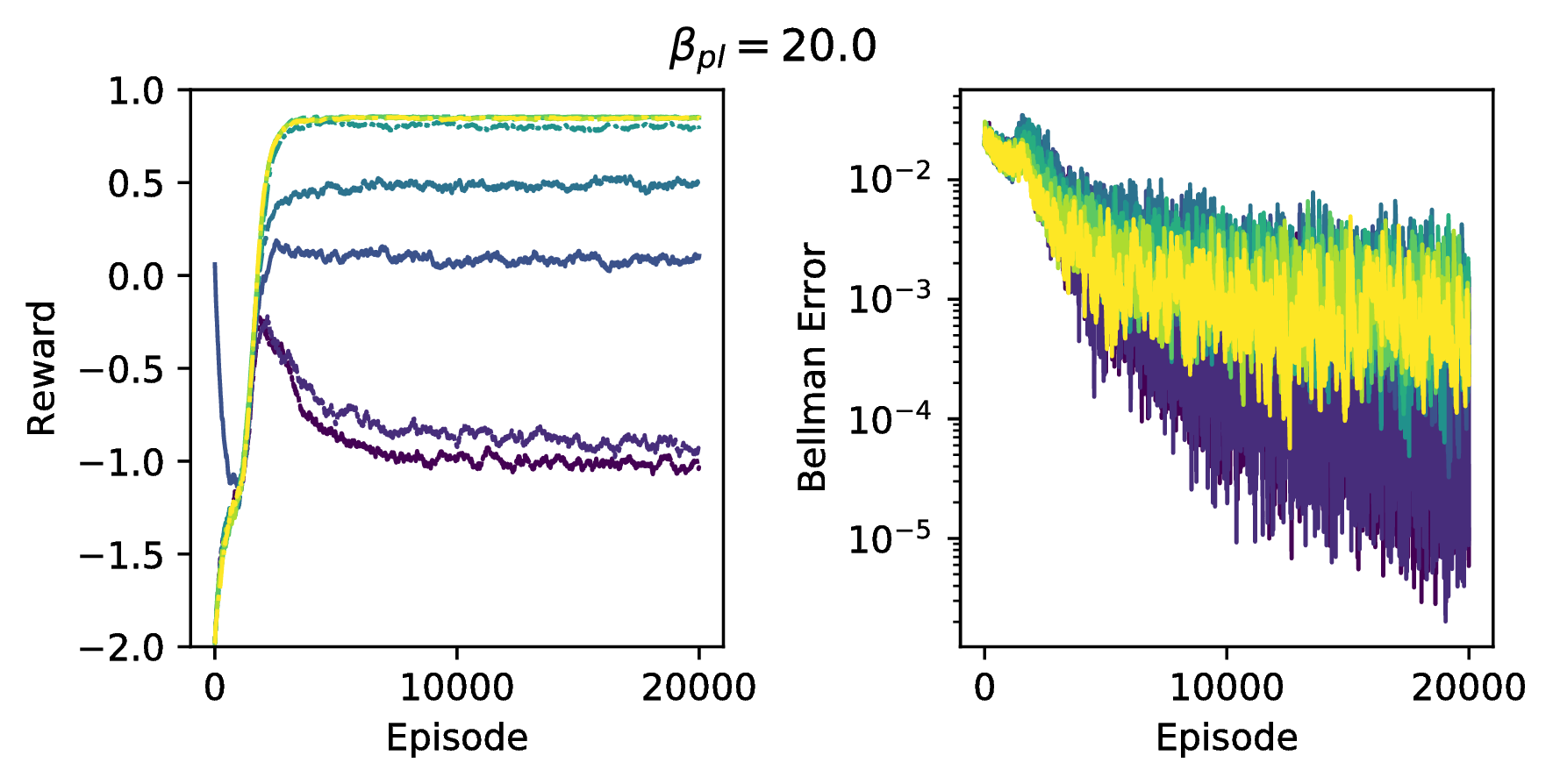}
}
\hfill\hspace{-5.0em}\hfill
\subfigure[Low Rationality]{
	\label{fig:LowRation}
\includegraphics[height=0.18\textwidth,width=0.41\textwidth]{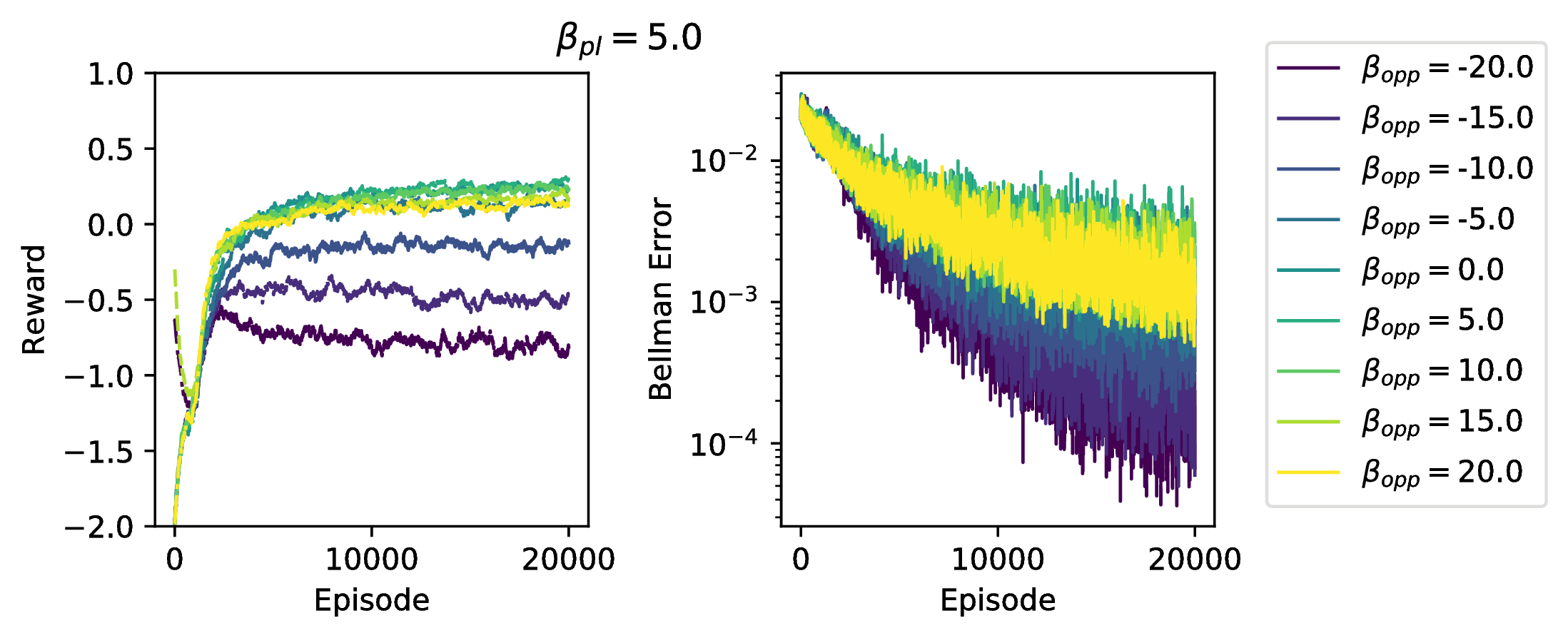}
}
\hfill\hspace{-3.1em}
\subfigure[Broad Range]{
	\label{fig:HeatMap}
\includegraphics[height=0.18\textwidth,width=0.20\textwidth]{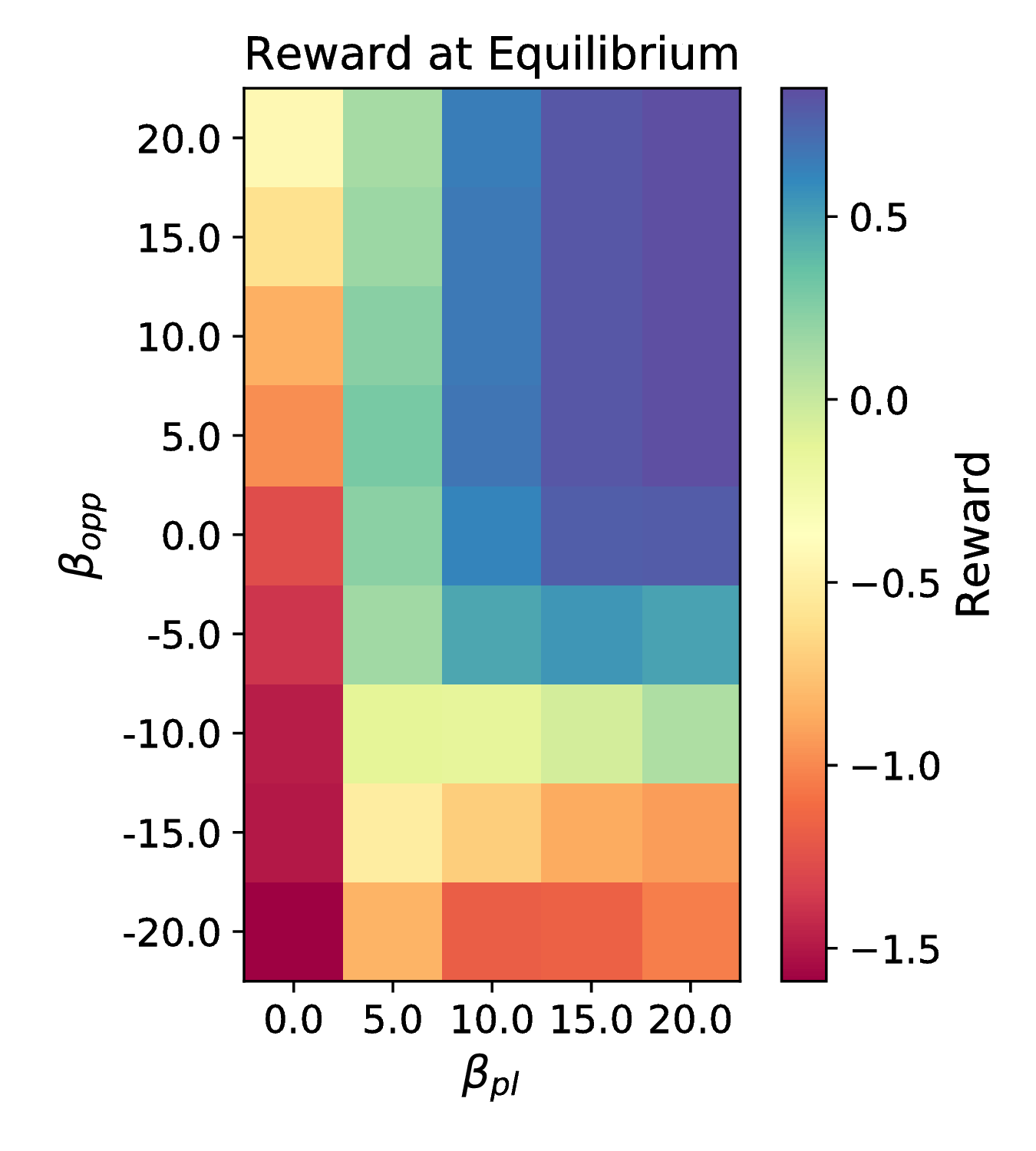}
}
\hfill
\vspace{-1.2em}
\caption{Evolution of reward and Bellman error during training for different $\beta_\text{pl}$ and $ \beta_\text{op}$. We vary $\beta_\text{op}$ while fixing $\beta_\text{pl}=20$ in panels (a), and $\beta_\text{pl} = 5.0 $ in panels (b). The heat map in (c) visualises these rewards for a broader range of parameters. These results confirm that our approach can modulate performance.}
\vspace{-.6em}
\end{figure*}

\section{Real-World Considerations} \label{sec:real_world}
Two restrictions limit the applicability of our algorithm to real-world scenarios. First, Algorithm~\ref{Alg:tabular} implicitly assumes the knowledge of the opponent's  parameter $\beta_{\text{op}}$. Obtaining $\beta_{\text{op}}$ in real-world settings can prove difficult. Second, our algorithm has been developed for low-dimensional state representations. Clearly, this restricts its applicability to high-dimensional states that are typical to computer games. 

To overcome these issues, we next develop an online maximum likelihood procedure to infer $\beta_\text{op}$ from data gathered through the interaction with the opponent, and then generalise Algorithm~\ref{Alg:tabular} to high-dimensional representations by proposing a deep learning architecture.

\subsection{Estimating $\beta_\text{op}$ \& Game-Balancing}\label{Sec:ML}
Rather than assuming access to the opponents rationality parameter, we next devise a maximum likelihood estimate that allows the agent to infer (in an online fashion) about $\beta_{\text{op}}$ and consequently, about the real policy of the opponent (through Equation~\eqref{eq:player_policy}).

Contrary to current SG techniques that attempt to approximate the opponent's policy directly, our method allows to reason about the opponent by only approximating a one dimensional parameter, i.e., $\beta_{\text{op}}$ in Equation~\eqref{eq:player_policy}\footnote{Please note that similar to the previous section we assume the opponent's reference policy $\rho_\text{op}$ to be uniform. This, however, does not impose a strong restriction since having a uniform reference policy enables enough flexibility to model various degrees of the opponent's performances (see Section~\ref{Sec:Experiments}). 
}. \\
\textbf{Estimating $\beta_{\text{op}}$:} We frame the problem of estimating $\beta_{\text{op}}$ as a one of online maximum likelihood estimation. Namely, we assume that the player interacts in $R$ rounds with the opponent. At each round, $j$, the player gathers a dataset of the form $\mathcal{D}^{(j)} = \left\{\bm{s}^{(j)}_{i},\bm{a}_{i}^{(j),\text{pl}},\bm{a}_{i}^{(j),\text{op}}\right\}_{i=1}^{m^{(j)}}$ with $m^{(j)}$ denoting the total number of sampled transitions during round $j$. Given $\mathcal{D}^{(j)}$, the agent estimates its knowledge of the opponent's model i.e., $\beta_{\text{op}}$ by solving the following problem\footnote{Please note that this problem can be easily solved using stochastic gradient descent.}
\begin{equation*}
\max_{\beta_{\text{op}}} \log \textrm{Pr}\left(\mathcal{D}^{(j)}\right) := \max_{\beta_{\text{op}}} \sum_{i=1}^{m^{(j)}} \log \pi_{\text{op}}^{\star}\left(\bm{a}_{i}^{(j),\text{op}}|\bm{s}_{i}^{(j)}\right), 
\end{equation*}
where $\pi_{\text{op}}^{\star}\left(\bm{a}_{i}^{(j),\text{op}}|\bm{s}_{i}^{(j)}\right)$ is defined in Equation~\eqref{eq:player_policy}. As rounds progress, the agent should learn to improve its estimate of $\beta_{\text{op}}$. Such an improvement is quantified, in terms of regret\footnote{Regret is a standard notion to quantify the performance of an online learning algorithm. Regret measures the performance of the agent with respect to an adversary that has access to all information upfront. }, in the following theorem for both a fixed and a time-varying opponent.
\begin{theorem}
After $R$ rounds, the average static-regret for estimating $\beta_{\text{op}}$ vanishes as: 
\begin{equation}
\label{EQ:Static}
\sum_{j=1}^{R} \mathcal{L}_{j}(\beta^{(j)}_{\text{op}}) - \min_{u} \Big[\sum_{j=1}^{R}\mathcal{L}_{j}(u) \Big] \approx \mathcal{O}(\sqrt{R})
\end{equation}
For a time-varying opponent, the dynamic regret bound dictates: 
\begin{align*}
\sum_{j=1}^{R} \mathcal{L}_{j}(\beta^{(j)}_{\text{op}}) -  \sum_{j=1}^{R}\min_{u_{j}}\mathcal{L}_{j}(u_{j})  &\approx  \mathcal{O}\Bigg(\sqrt{R} \\
&\left(1 + \sum_{j=1}^{R-1}||u_{j+1}^{\star} - u_{j}^{\star}||_{2}^{2}\right)\Bigg), 
\end{align*}
with $\mathcal{L}_{j}(\cdot)$ denoting the negative of the log-likelihood and $u_{j}^{\star} = \arg\min_{u} \mathcal{L}_{j}(u)$. 
\end{theorem}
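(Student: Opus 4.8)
The plan is to recognise this as a textbook online convex optimisation (OCO) problem and to reduce both regret bounds to known results, once the regularity of the per-round losses is verified. The crucial observation is that each loss $\mathcal{L}_j(\beta_{\text{op}}) = -\sum_{i=1}^{m^{(j)}} \log \pi^\star_{\text{op}}(\bm a_i^{(j),\text{op}}|\bm s_i^{(j)})$ is convex in $\beta_{\text{op}}$. Substituting the closed form from Equation~\eqref{eq:player_policy} gives
\begin{equation*}
\mathcal{L}_j(\beta_{\text{op}}) = \sum_{i=1}^{m^{(j)}} \Big[ \log Z_{\text{op}}(\bm s_i^{(j)}) - \beta_{\text{op}} \mathcal{Q}^\star_{\text{op}}(\bm s_i^{(j)}, \bm a_i^{(j),\text{op}}) - \log \rho_{\text{op}}(\bm a_i^{(j),\text{op}}|\bm s_i^{(j)}) \Big],
\end{equation*}
where each normaliser $\log Z_{\text{op}}(\bm s) = \log \sum_{\bm a^{\text{op}}} \rho_{\text{op}}(\bm a^{\text{op}}|\bm s)\exp(\beta_{\text{op}} \mathcal{Q}^\star_{\text{op}}(\bm s, \bm a^{\text{op}}))$ is a log-sum-exp of functions affine in $\beta_{\text{op}}$, hence convex, while the remaining terms are affine. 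I would then differentiate explicitly to obtain $\partial_{\beta_{\text{op}}} \mathcal{L}_j = \sum_i \big(\mathbb{E}_{\pi^\star_{\text{op}}}[\mathcal{Q}^\star_{\text{op}}] - \mathcal{Q}^\star_{\text{op}}(\bm s_i, \bm a_i^{\text{op}})\big)$ and note that the second derivative equals a sum of variances of $\mathcal{Q}^\star_{\text{op}}$ under $\pi^\star_{\text{op}}$, which is non-negative; this re-confirms convexity and simultaneously isolates the gradient we must bound.

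Next I would establish the two standing conditions required by OCO. First, I restrict $\beta_{\text{op}}$ to a compact feasible interval of diameter $D$, natural since the rationality parameter is bounded in practice. Second, I bound the gradient: because $\mathcal{R}_{\mathcal{G}}$ is bounded and $\gamma < 1$, the value $\mathcal{Q}^\star$ and its certainty-equivalent marginal $\mathcal{Q}^\star_{\text{op}}$ are uniformly bounded, so each term $|\mathbb{E}_{\pi^\star_{\text{op}}}[\mathcal{Q}^\star_{\text{op}}] - \mathcal{Q}^\star_{\text{op}}(\bm s_i, \bm a_i^{\text{op}})| \le G_0$, yielding a per-round gradient bound $\|\partial_{\beta_{\text{op}}} \mathcal{L}_j\|_2 \le G$ (with $G$ absorbing $m^{(j)}$ or its uniform upper bound). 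A convex loss with bounded gradient over a bounded domain fits exactly the hypotheses of online gradient descent.

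For the static bound I would run projected online gradient descent with step size $\eta_j \propto 1/\sqrt{j}$ and invoke Zinkevich's classical regret theorem, which delivers $\sum_j \mathcal{L}_j(\beta^{(j)}_{\text{op}}) - \min_u \sum_j \mathcal{L}_j(u) \le \tfrac{3}{2} G D \sqrt{R} = \mathcal{O}(\sqrt{R})$, establishing Equation~\eqref{EQ:Static}. For the time-varying case I would compare against the moving sequence $\{u_j^\star\}$ and use the standard telescoping argument, in which the extra cost relative to the static case is governed by the deviations of consecutive comparators; the squared form $\sum_j \|u_{j+1}^\star - u_j^\star\|_2^2$ emerges precisely when the $\|\beta^{(j)}_{\text{op}} - u_j^\star\|^2$ terms fail to telescope and are re-expanded, after which the $\sqrt{R}$ factor follows from the same step-size schedule.

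The main obstacle is not the regret algebra, which is routine, but verifying the regularity conditions for this particular likelihood. Specifically, $\mathcal{Q}^\star_{\text{op}}$ formally depends on the rationality parameters through $\mathcal{V}^\star$, so to make the convexity genuine one must treat $\mathcal{Q}^\star_{\text{op}}$ as a \emph{fixed} learned estimate during each maximum-likelihood step, holding the player's value model constant while fitting only $\beta_{\text{op}}$ to the observed opponent actions. Supplying an explicit uniform bound on $\mathcal{Q}^\star_{\text{op}}$ valid across all rounds is the second delicate point; once both are secured, the OCO machinery yields the two bounds with no further difficulty.
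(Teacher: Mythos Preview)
The paper does not actually supply a proof of this theorem. Its appendix derives the gradient of the log-likelihood with respect to $\beta_{\text{op}}$ and records the resulting update rule, but the regret bounds are merely stated. There is therefore nothing to compare your argument against line by line.

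That said, your plan is the natural one and is sound in outline. Recognising each $\mathcal{L}_j$ as a sum of log-partition functions minus affine terms, hence convex in $\beta_{\text{op}}$, and then invoking Zinkevich's bound for projected online gradient descent is exactly how one obtains $\mathcal{O}(\sqrt{R})$ static regret; the dynamic bound with the path-variation term $\sum_j \|u_{j+1}^\star - u_j^\star\|_2^2$ is likewise a standard extension. Your identification of the two regularity conditions (bounded domain, bounded gradients via bounded $\mathcal{Q}^\star_{\text{op}}$) is correct, and your gradient expression matches the one the paper derives in its appendix.

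You also put your finger on the one genuine caveat: $\mathcal{Q}^\star$ and hence $\mathcal{Q}^\star_{\text{op}}$ depend on $\beta_{\text{op}}$ through the fixed point $\mathcal{V}^\star$, so convexity only holds if the value estimate is frozen during the likelihood step. The paper silently adopts this convention in Algorithm~2 (update $\mathcal{Q}$, then take a gradient step in $\beta_{\text{op}}$ treating $\mathcal{Q}$ as given), so your reading is consistent with what the authors intend. With that stipulation made explicit and a uniform bound on $\mathcal{Q}$ supplied, your argument goes through and in fact provides more detail than the paper itself.
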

From the above theorem we conclude that against a fixed-opponent our method guarantees correct approximation of $\beta_{\text{op}}$. This is true since the average regret, $\mathcal{O}(\sqrt{R})/R$, vanishes as $R\rightarrow \infty$. When it comes to a dynamic opponent, however, it is clear that our bound depends on how the value of the opponents multiplier parameter (in other words its policy) vary with in terms of rounds. In case these variations are bounded in number, we can still guarantee vanishing regrets. If not, the regret bound can grow arbitrarily large since $\sum_{j=1}^{R-1}||u_{j+1}^{\star} - u_{j}^{\star}||_{2}^{2}$ can introduce a factor $R$.

\textbf{Game Balancing:} Now that we have a way to learn $\mathcal Q^\star$ and estimate $\beta_\text{op}$ simultaneously, we could balance the game using  the estimate of $\beta_\text{op}$ to adjust the player's parameter $\beta_\text{pl}$. A simple heuristic that proved successful in our experiments was to simply set $\beta_{\text{pl}} = |\beta_\text{op} | + \Delta$, where $\Delta$ denotes an additional performance-level the player can achieve. Setting $\Delta = 0$ would correspond to agents with the same KL constraints, whereas setting $\Delta > 0$ would imply a stronger player with a softer KL constraint (see Section~\ref{Sec:ExpTwo}). 

\subsection{Deep Two-Player Soft Q-Learning}\label{Sec:Deep}
When tackling higher dimensional problems, one has to rely on function approximators to estimate the Q-function, or in our case, the function $\mathcal{Q}(\bm{s},\bm{a}^{\text{pl}},\bm{a}^{\text{op}})$. We borrow two ideas from deep Q-networks~\cite{mnih2015human} that allow us to stabilise learning with high-dimensional representations for our SG setting. First, we use the notion of a replay memory to store the following transitions $(\bm{s},\bm{a}^{\text{pl}},\bm{a}^{\text{op}},\bm{s}^{\prime},\mathcal{R}_{\mathcal{G}}(\cdot))$ and, second, we use a target network denoted by $\mathcal{Q}(\bm{s},\bm{a}^{\text{pl}},\bm{a}^{\text{op}};\bm{\theta}_{i}^{-})$ to handle non-stationarity of the objective. We learn $\mathcal{Q}(\bm{s},\bm{a}^{\text{pl}},\bm{a}^{\text{op}};\bm{\theta}_{i})$, by using a neural network that receives $\bm s$ as input and outputs a matrix of $\mathcal Q$-values for each combination of the agents' actions. The loss function that we seek to minimise is $\mathcal{L}(\bm{\theta}_{i}) = \mathbb{E} \big[\left(\mathcal{R}_{\mathcal{G}}(\bm{s},\bm{a}^{\text{pl}},\bm{a}^{\text{op}})+ \gamma \mathcal{V}(\bm{s};\bm{\theta}_{i}^{-}) - \mathcal{Q}(\bm{s},\bm{a}^{\text{pl}},\bm{a}^{\text{op}};\bm{\theta}_{i})\right)^{2}
\big]$

with the expectation taken over the distribution of transitions sampled from the replay memory, and $\mathcal{V}(\bm{s};\bm{\theta}_{i}^{-})$ computed as in Equation~\eqref{eq:bounded_rational_value}. Clearly, the above optimisation problem is similar to standard DQNs with the difference that error is measured between \emph{soft Q-values}. 

\section{Experiments}\label{Sec:Experiments}
We consider two cases in our experiments. The first assumes a low-dimensional setting, while the second targets the high-dimensional game of Pong. In both cases we consider full and no control of the opponent. Full control will allow us to validate our intuitions of tuneable behaviour, while the second sheds-the-light on the game balancing capabilities of Section~\ref{sec:real_world}. 

\subsection{Low-dimensional Experiments}

\textbf{The Setup:} We validate Algorithm~\ref{Alg:tabular} on a 5 $\times$ 6 grid-world, where we consider two agents interacting. Each can choose an action from $\mathcal{A} = \{\text{left},\text{right},\text{up}, \text{down}, \text{pick-up}\}$. The first four actions are primitive movements, while the last corresponds to picking-up an object when possible. The reward of the first player is set to $-0.02$ for any movement and to $+1$ for picking up the  object located in cell (2,6). 

The setting described in this paper allows for a range of games that can be continuously varied between cooperative and defective games depending on the choice of $\beta_{\text{op}}$ -- a setting not allowed by any of the current techniques to stochastic games. In other words, the goal of the opponent, now, depends on the choice of $\beta_{\text{op}}$. Namely, for positive values of $\beta_{\text{op}}$, the opponent is collaborative, whereas for negative $\beta_{\text{op}}$ it is adversarial. $\beta_{\text{op}}$ values in between correspond to tuneable performance varying between the above two extremes.  

We demonstrate adversarial behaviour by allowing agents to block each other either when trying to reach the same cell, or when attempting to transition to a cell previously occupied by the other agent. In such cases the respective agent remains in its current position. Given the determinism of the environment, a perfectly rational adversarial opponent can always impede the player to reach the goal. However, due to the KL constraints the opponent's policy becomes ``less'' aggressive, allowing the player to exploit the opponent's mistakes and arrive to the goal. For all experiments we used  a high learning rate of $\alpha = 0.5$\footnote{A deterministic environment transitions allows for a large learning rate.}.

\textbf{Tuning the Player's Performance:} To validate tuneablity, we assess the performance of the player when reaching convergence while varying $\beta_{\text{pl}}$ and $\beta_{\text{op}}$. In the first set of experiments, we fixed $\beta_{\text{pl}} =20$ and varied $\beta_{\text{op}} = \{-20, -15, -10, -5, 0, 5, 10, 15, 20\}$. We expect that the player obtains high reward for collaborative opponents ($\beta_{\text{op}}>0$) or highly sub-optimal adversarial opponents ($\beta_{\text{op}} \approx 0$), and low rewards for strong adversarial opponents ($\beta_{\text{op}} \ll 0$). Indeed, the results shown in Figure~\ref{fig:HighRation} confirm these intuitions.  

For a broader spectrum of analysis, we lower $\beta_{\text{pl}}$ from $20$ to $5$ and re-run the same experiments. Results in Figure~{\ref{fig:LowRation}} reaffirm the previous conclusions. Here, however, the player attains slightly lower rewards as $\beta_\text{pl}$ is decremented. Finally, in Figure~\ref{fig:HeatMap} we plot the reward attained after convergence for a broad range of parameter values. We clearly see the effect of the modulation in both parameters on the resultant reward. The best reward is achieved when both parameters have positive high values, and the least reward for the lowest values. \\
\textbf{Estimating $\beta_\text{op}$:} 
The goal of these experiments is to evaluate the correctness of our maximum likelihood estimate (Section~\ref{Sec:ML}) of $\beta_{\text{op}}$. To conduct these experiments, we fixed $\beta_{\text{pl}} = 10$ and generated data with  $\beta^{\star}_{\text{op}} = 5$ and $\beta^{\star}_{\text{op}} = -10$ that are unknown to the player. At each interaction with the environment, we updated $\mathcal{V}(\cdot)$ according to Algorithm~\ref{Alg:tabular} and $\beta_{\text{op}}$ using a gradient step in the maximum likelihood objective. Results reported in Figure~\ref{fig:exp4_learning_beta}, clearly demonstrate that our extension to estimate $\beta_{\text{op}}$ is successful\footnote{In the case where the opponent would have an arbitrary policy $\pi_a$ then $\beta_{\text{op}}$would converge to a value that attempts to make $\pi_{\text{op},\beta_{\text{op}}}$ as close as possible to $\pi_a$.}.

\subsection{High-dimensional Experiments}\label{Sec:ExpTwo}
We repeat the experiments above but now considering our deep learning architecture of Section~\ref{Sec:Deep} on the game of Pong. 

\textbf{The Setup:}
We use the game Pong  from the Roboschool package\footnote{https://github.com/openai/roboschool}. The state space is 13-dimensional i.e., x-y positions and x-y- velocities for both agents and the ball, and an additional dimension for time. We modified the action space to consist of $ |\mathcal A | = 9$ actions where the set corresponds to  $\mathcal A = \left\lbrace \text{left}, \text{stay}, \text{right} \right\rbrace \times \left\lbrace \text{up}, \text{stay}, \text{down} \right\rbrace$. We also modified the reward function to make it compatible with zero-sum games in such a way that if the player scores, the reward $\mathcal R_{\mathcal G}$ is set to $+1$, whereas if the opponent scores, to $-1$. The networks that represent soft Q-values, $\mathcal{Q}(\bm{s},\bm{a}^{\text{pl}}, \bm{a}^{\text{op}})$, are multilayer perceptrons composed of two hidden layers, each with $100$ units, and an a matrix output layer composed of $| \mathcal A \times \mathcal A | = 81$ units ($9\times 9$ actions). Here, each unit denotes a particular combination of $\bm{a}^{\text{pl}} \in \mathcal A$ and $\bm{a}^{\text{op}} \in \mathcal A$. After each hidden layer, we introduce a ReLU non-linearity. We used a learning rate of $10^{-4}$, the ADAM optimizer, a batch size of $32$, and updated the target every $30000$ training steps. \\
\textbf{Tuning the Player's Performance:} In this experiment, we demonstrate successful tuneable performance. Figure~\ref{fig:PongOne} shows that for a highly adversarial opponent (i.e., $\beta_{\text{op}} = -50$) the player ($\beta_\text{pl} = 20$) acquired negative rewards, whereas for a weak opponent or even collaborative, the player obtained high reward. Game-play videos can be found at https://sites.google.com/site/submission3591/. 

\begin{figure}[t]
    \centering
    \includegraphics[width = 0.95\columnwidth, height=3cm]{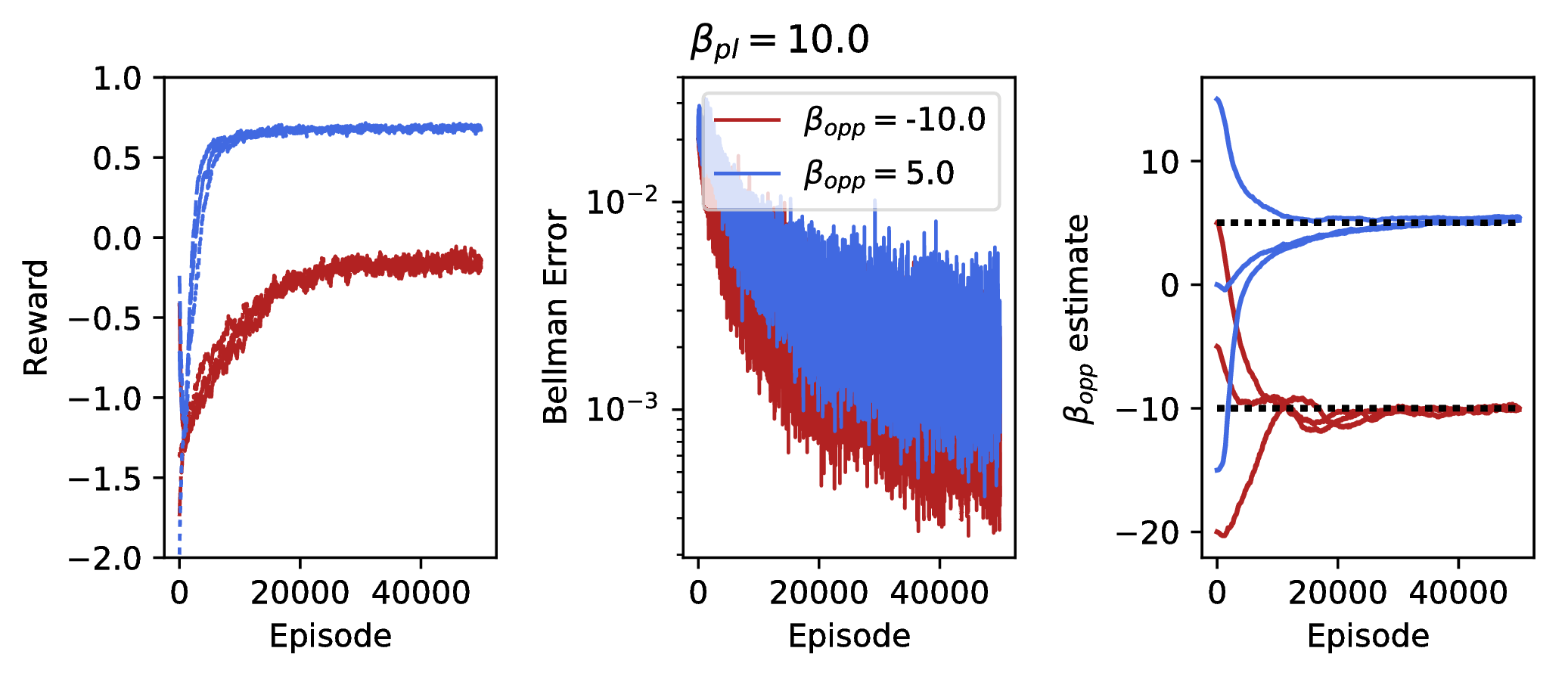}
    \caption{Reward, Bellman error and $\beta_{\text{op}}$ estimate over episodes. We see that the maximum likelihood estimator is capable of discovering the correct value for the red ($\beta_\text{op}^\star = -10$) and blue ($\beta_\text{op}^\star = 5$) opponents, from three different initial estimates of $\beta_{\text{op}}$.}
    \label{fig:exp4_learning_beta}
\end{figure}

\begin{figure}[t]
	\centering
	\includegraphics[scale=0.105]{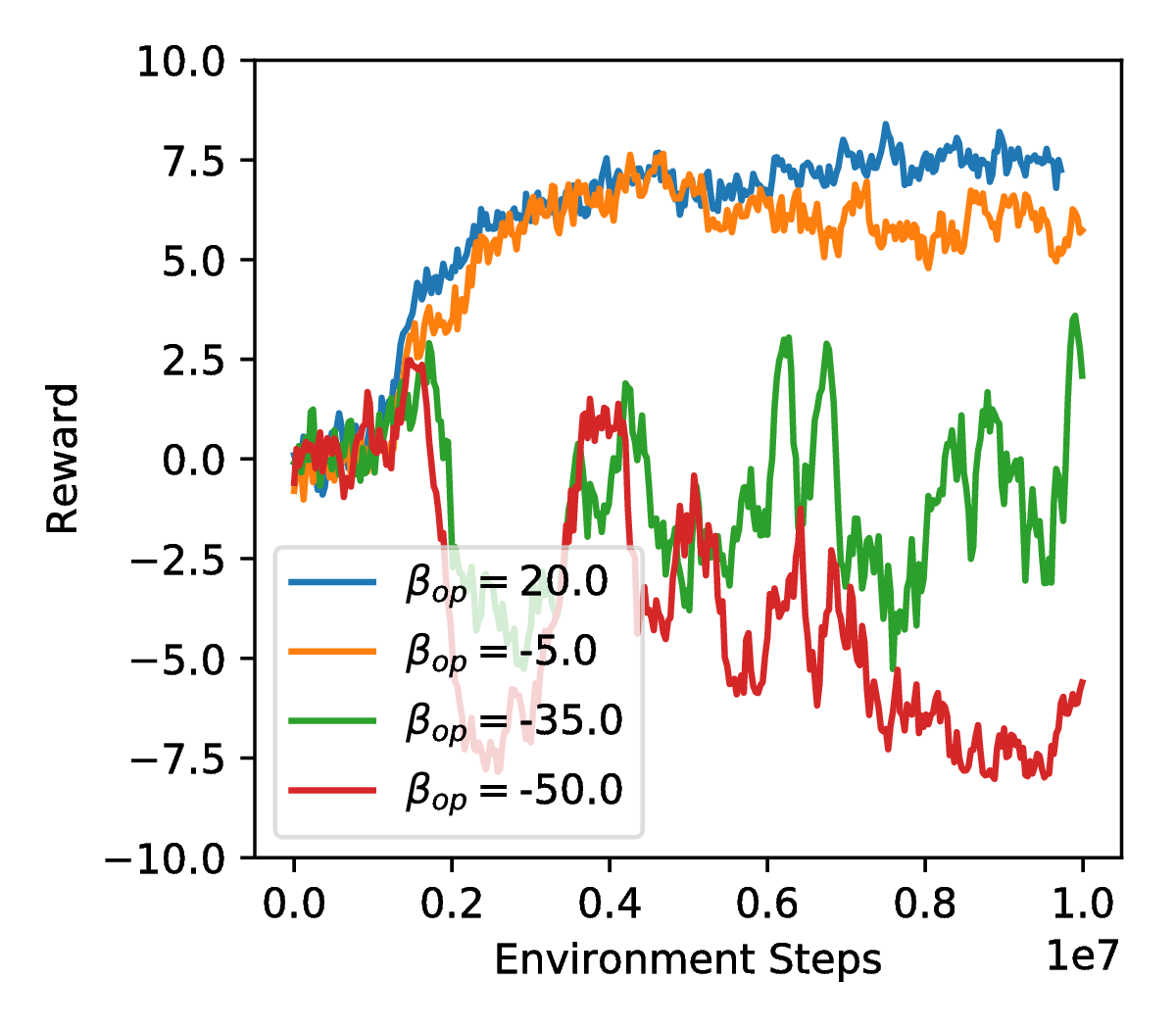}
	\caption{Results on Pong showing player's ($\beta_\text{pl} = 20$) performance depending on $\beta_\text{op}$. We see that lower values of $\beta_{\text{op}}$ yield more aggressive opponents, depicting lower reward for the player.}
	\label{fig:PongOne}
\end{figure}

\begin{figure}[h!]
    \centering
    \includegraphics[width=0.9\columnwidth]{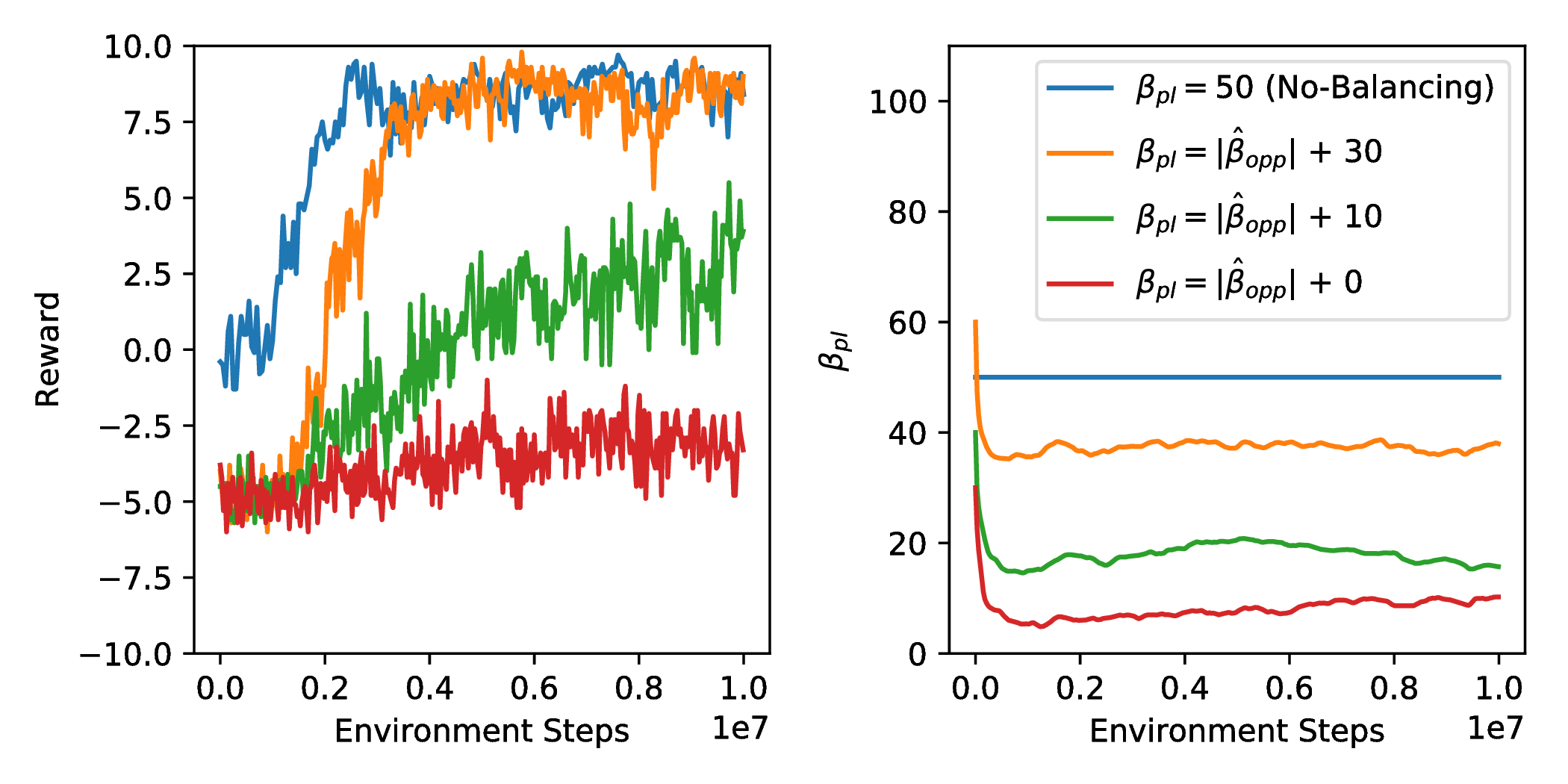}
    \caption{Player's performance depending on our game balancing scheme (see Section~\ref{Sec:ML}). We see that without balancing, the (fixed) player is much stronger than the opponent, whereas we obtain different performances depending on the balance parameter $\Delta$. On the right, we show the parameter $\beta_\text{pl}$ adapted online through the current $\hat \beta_\text{op}$ estimate. }\vspace{-0.5cm}
    \label{fig:PongTwo}
\end{figure}

\textbf{Estimating $\beta_\text{op}$ and game balancing:} Finally, we assess the performance of the maximum likelihood estimator applied to game balancing  using neural networks. We pre-trained a policy for the opponent with parameters $\beta_{\text{pl}}^\star = 50.0$ and $\beta_{\text{op}} = -20.0$, thus the player being stronger than the opponent (see blue line in Figure~\ref{fig:PongTwo}). In Figure~\ref{fig:PongTwo}, we demonstrate game balancing using Section~\ref{Sec:ML}. In particular, we are able to vary the player's performance by adapting (online) $\beta_\text{pl}$. For instance, if we set $\beta_\text{pl}$ close to $\beta_\text{op}$ we observe that the player is as strong as the opponent attaining $0$ reward, see green line.

\section{Conclusion}
We extended two-player stochastic games to agents with KL constraints. We evaluated our method theoretically and empirically in both small and high-dimensional state spaces.
The most interesting direction for future work is to scale our method to a large number of interacting agents by extending the approach in~\cite{mguni2018}.

\bibliographystyle{named}
\bibliography{bibliography}

\appendix

\section{Appendix}
\subsection{Proof Lemma 1}

\begin{proof}
The function $f(\pi_{\text{pl}}, \pi_{\text{op}}, \bm s, \mathcal V)$ is a concave function in $\pi_{\text{pl}}$  when fixing $\pi_{\text{op}}$ because the terms $\mathbb E_{\pi_{\text{pl}} \pi_{\text{op}}}[\mathcal R_{\mathcal G}]$ and $\gamma \mathbb E_{\pi_{\text{pl}} \pi_{\text{op}} \mathcal T_{\mathcal G}} \mathcal V(\bm{s}')$ are linear in $\pi_{\text{pl}}$, the last term is constant and the relative entropy term $-\frac{1}{\beta_{\text{pl}}}\text{KL}(\pi_{\text{pl}} || \rho_{\text{pl}})$ is concave for  $\beta_{\text{pl}} >0$. Similarly, $f(\pi_{\text{pl}}, \pi_{\text{op}}, \bm s , \mathcal V)$ is convex in $\pi_{\text{op}}$ when fixing $\pi_{\text{pl}}$  because  $- \frac{1}{\beta_{\text{op}}}\text{KL}(\pi_{\text{op}}||  \rho_{\text{op}})$ is convex for $\beta_{\text{op}} < 0$ and all the other term are linear or constant in $\pi_{\text{op}}$. If $f(\pi_{\text{pl}},\pi_{\text{op}}, \bm s, \mathcal F) $ is a concave-convex function then 
\begin{align*}
&\max_{\pi_\text{pl}} \min_{\pi_{\text{op}}} f(\pi_{\text{pl}},\pi_{\text{op}}, \bm s, \mathcal V) = \min_{\pi_{\text{op}}} \max_{\pi_{\text{pl}}} f(\pi_{\text{pl}},  \pi_{\text{op}}, \bm s, \mathcal V).
\end{align*}
For the remaining case it is trivial to show that 
\begin{equation}
\max_{\pi_\text{pl}} \max_{\pi_{\text{op}}} f(\pi_{\text{pl}},\pi_{\text{op}}, \bm s, \mathcal V) = \max_{\pi_{\text{op}}} \max_{\pi_{\text{pl}}} f(\pi_{\text{pl}},  \pi_{\text{op}}, \bm s, \mathcal V). \nonumber
\end{equation}
Therefore, $\mathcal B_{\text{pl}} \mathcal V(\bm s)= \mathcal B_{\text{op}} \mathcal V(\bm s)$.
\end{proof}

\subsection{Proof of Theorem 1}
We start by proving two propositions that we use later in the proof of Theorem 1.
\begin{proposition}
\begin{equation*}
	\lvert\max_x f(x) -\max_x g(x)\rvert \le \max_x \lvert f(x) - g(x) \rvert
\end{equation*}
\end{proposition}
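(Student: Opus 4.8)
The plan is to use the standard argument based on evaluating each maximum at the maximiser of the other function and then symmetrising. First I would let $x^\star \in \argmax_x f(x)$, so that $\max_x f(x) = f(x^\star)$. The single key observation is that $\max_x g(x) \ge g(x^\star)$, which lets me bound $\max_x f(x) - \max_x g(x) = f(x^\star) - \max_x g(x) \le f(x^\star) - g(x^\star)$. Chaining this with $f(x^\star) - g(x^\star) \le \lvert f(x^\star) - g(x^\star)\rvert \le \max_x \lvert f(x) - g(x)\rvert$ yields the one-sided bound $\max_x f(x) - \max_x g(x) \le \max_x \lvert f(x) - g(x)\rvert$.

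Next I would exchange the roles of $f$ and $g$. Repeating the identical chain with some $y^\star \in \argmax_x g(x)$ gives $\max_x g(x) - \max_x f(x) \le \max_x \lvert g(x) - f(x)\rvert = \max_x \lvert f(x) - g(x)\rvert$, using the symmetry $\lvert g - f\rvert = \lvert f - g\rvert$. Combining the two one-sided bounds, and recalling that $\lvert a\rvert \le c$ holds whenever both $a \le c$ and $-a \le c$, establishes $\lvert \max_x f(x) - \max_x g(x)\rvert \le \max_x \lvert f(x) - g(x)\rvert$, as claimed.

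There is essentially no obstacle here: the whole proof amounts to two applications of a single inequality together with a symmetry step. The only point requiring any care is to remember to use $\max_x g(x) \ge g(x^\star)$, rather than equality, when passing from the maximum of $g$ to its value at the maximiser of $f$. I would also remark that the argument only needs the maxima to be attained; to avoid even that assumption one can replace each $\max$ by $\sup$, take $x^\star$ to be an $\varepsilon$-near-maximiser, and let $\varepsilon \to 0$, after which the same inequalities go through verbatim.
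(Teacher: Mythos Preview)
Your proof is correct and follows essentially the same approach as the paper: evaluate at the maximiser of one function, use $\max_x g(x) \ge g(x^\star)$, and bound by $\max_x \lvert f(x)-g(x)\rvert$. The only cosmetic difference is that the paper invokes a without-loss-of-generality assumption ($\max_x f(x) \ge \max_x g(x)$) to handle a single case, whereas you write out both one-sided bounds explicitly and combine them; the underlying idea is identical.
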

\begin{proof}
	Given that 
	\begin{align*}
		&|x| = \lvert -x \rvert \implies \\
	  &|\max_x f(x) -\max_x g(x)|= |\max_x g(x) -\max_x f(x)|,
	\end{align*}
	 we can assume without loss of generality that  $\max_x f(x) \ge \max_x g(x)$. Let $x' = \arg \max f(x)$ then
	\begin{align*}
	|\max_x f(x) -\max_x g(x)| & \le 
	 |f(x') - g(x')| \\
	 &\le \max_{x} |f(x) - g(x)| 
	\end{align*}
\end{proof}
\begin{proposition}
\begin{equation*}
	|\min_x f(x) -\min_x g(x)| \le \max_x |f(x) - g(x)|
\end{equation*}
\end{proposition}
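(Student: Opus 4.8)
The plan is to reduce this statement directly to Proposition~1, which I may assume as already established. The key observation is the elementary identity $\min_x h(x) = -\max_x\bigl(-h(x)\bigr)$, valid for any function $h$. Applying this to both $f$ and $g$ rewrites the left-hand side entirely in terms of maxima, at which point Proposition~1 applies verbatim to the negated functions.

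Concretely, first I would substitute to obtain
\begin{equation*}
\lvert \min_x f(x) - \min_x g(x) \rvert = \bigl\lvert -\max_x(-f(x)) + \max_x(-g(x)) \bigr\rvert = \bigl\lvert \max_x(-f(x)) - \max_x(-g(x)) \bigr\rvert,
\end{equation*}
where the last equality just uses $\lvert z \rvert = \lvert -z \rvert$. Next I would apply Proposition~1 with the roles of $f$ and $g$ played by $-f$ and $-g$ respectively, giving
\begin{equation*}
\bigl\lvert \max_x(-f(x)) - \max_x(-g(x)) \bigr\rvert \le \max_x \bigl\lvert (-f(x)) - (-g(x)) \bigr\rvert = \max_x \lvert g(x) - f(x) \rvert = \max_x \lvert f(x) - g(x) \rvert.
\end{equation*}
Chaining these two displays yields the claim.

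A fully self-contained alternative, mirroring the direct argument used for Proposition~1, is also available: by the symmetry of the absolute value I would assume without loss of generality that $\min_x f(x) \ge \min_x g(x)$, set $x' = \arg\min_x g(x)$, and then bound $\min_x f(x) - \min_x g(x) \le f(x') - g(x') \le \max_x \lvert f(x) - g(x) \rvert$, using that $\min_x f(x) \le f(x')$ while $\min_x g(x) = g(x')$. I expect no substantive obstacle here; the only point requiring a little care is selecting the correct extremizer (the $\arg\min$ of the \emph{smaller} of the two minima, rather than the larger) so that the first inequality points the right way, which is the exact dual of the choice made in Proposition~1.
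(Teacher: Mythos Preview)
Your proposal is correct. Your self-contained alternative is exactly the paper's proof: the paper assumes without loss of generality that $\min_x f(x) \ge \min_x g(x)$, takes $x' = \arg\min g(x)$, and bounds $\lvert \min_x f(x) - \min_x g(x)\rvert \le \lvert f(x') - g(x')\rvert \le \max_x \lvert f(x) - g(x)\rvert$, just as you describe.

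Your primary approach---reducing to Proposition~1 via the identity $\min_x h(x) = -\max_x(-h(x))$---is a genuinely different route that the paper does not take. It is arguably cleaner, since it avoids repeating the ``pick the right extremizer'' argument and makes explicit that Proposition~2 is the formal dual of Proposition~1. The paper's direct argument, on the other hand, keeps the two propositions logically independent. Both are equally short and equally valid here.
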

\begin{proof}
	Given that 
	\begin{align*}
		&|x| = \lvert -x\rvert \implies \\
		 &|\min_x f(x) -\min_x g(x)|= |\min_x g(x) -\min_x f(x)|,
	\end{align*}
	 Therefore, without loss of generality we can assume $\min_x f(x) \ge \min_x g(x)$. Let $x' = \arg \min g(x)$ then
		\begin{align*}
		|\min_x f(x) -\min_x g(x)| &\le |f(x') - g(x')|\\
		& \le \max_{x} |f(x) - g(x)| 
		\end{align*}
\end{proof}

\begin{corollary}\label{col:extremums}
	From Proposition 1 and 2 we can conclude that  $\lvert \ext_x f(x) - \ext_x g(x)\rvert \le \max_x \lvert f(x) - g(x)\rvert$ where both extremum operators are equal and either $\max$ or $\min$.
\end{corollary}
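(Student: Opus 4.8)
The plan is to reduce the claim to a short case analysis on the two possible instantiations of the extremum operator. By definition, $\ext_x$ denotes $\max_x$ when $\beta_{\text{op}} > 0$ and $\min_x$ when $\beta_{\text{op}} < 0$, so in any fixed regime it is exactly one of these two operators and never a mixture. The phrase ``where both extremum operators are equal'' is precisely the hypothesis I would rely on: the same operator (either $\max$ or $\min$) is applied to $f$ and to $g$. This leaves exactly two cases to verify, and I expect each to be dispatched by invoking one of the preceding propositions verbatim.

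First I would treat the case $\ext_x = \max_x$. Here the target inequality $\lvert \ext_x f(x) - \ext_x g(x) \rvert \le \max_x \lvert f(x) - g(x) \rvert$ is literally the statement of Proposition 1, so no further argument is required. Second, for $\ext_x = \min_x$, the inequality reads $\lvert \min_x f(x) - \min_x g(x) \rvert \le \max_x \lvert f(x) - g(x) \rvert$, which is exactly Proposition 2. Since the definition of $\ext$ admits only these two possibilities, combining the two cases yields the corollary in full generality.

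The substantive content has already been established in Propositions 1 and 2, each of which bounds the discrepancy between two extrema by the uniform norm $\max_x \lvert f(x) - g(x) \rvert$; the corollary merely repackages these two bounds into a single statement phrased in terms of $\ext$. Consequently there is no genuine obstacle here. The only point requiring care is the bookkeeping observation that $\ext$ is applied identically to both arguments, so that one of the two propositions applies directly and we never face a mixed $\max$/$\min$ expression — a configuration that neither proposition, on its own, would cover.
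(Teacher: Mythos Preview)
Your proposal is correct and matches the paper's approach exactly: the paper gives no explicit proof of this corollary, treating it as an immediate consequence of Propositions~1 and~2, which is precisely the two-case split you spell out.
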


\paragraph{Proof Theorem 1}
Now we continue with the full proof of Theorem 1.
\begin{proof}
To show contraction, we start by explicitly rewriting the infinity norm as 
\begin{align*}
	& \|  \mathcal{B} \mathcal{V} - \mathcal{B} \mathcal{\bar{V}} \|_\infty = \max_{\bm{s}\in \mathcal{S}} \left| \mathcal{B}\mathcal{V}(\bm{s}) - \mathcal{B} \mathcal{\bar{V}}(\bm{s})  \right|\\
	&= \max_{\bm{s} \in \mathcal{S}} \Big| \max_{\pi_{\text{pl}}} \ext_{\pi_{\text{op}}}  f(\pi_{\text{pl}}, \pi_{\text{op}}, \bm{s}, \mathcal{V})  \\
& \hspace{10em}- \max_{\pi_{\text{pl}}} \ext_{\pi_{\text{op}}} f(\pi_{\text{pl}}, \pi_{\text{op}}, \bm{s}, \mathcal{\bar{V}}) \Big | \\
	& = \max_{\bm{s} \in \mathcal{S}} \Bigg| \max_{\pi_{\text{pl}}} \mathbb E_{\pi_{\text{pl}}} \left[ \mathcal{Q}_{\text{pl}}(\bm{s},\bm{a}^{\text{pl}}) \right] - \frac{1}{\beta_{\text{pl}}} \text{KL} (\pi_{\text{pl}}||\rho_{\text{pl}}) \\
	& \hspace{5em} -\max_{\pi_{\text{pl}}} \mathbb E_{\pi_{\text{pl}}} \left[ \mathcal{\bar{Q}}_\text{pl}(\bm{s},\bm{a}^{\text{pl}}) \right] - \frac{1}{\beta_{\text{pl}}} \text{KL}(\pi_{\text{pl}} || \rho_{\text{pl}}) \Bigg|,
\end{align*}
where from the second equality to the third we solved $\ext_{\pi_{\text{op}}}f(\pi_{\text{pl}}, \pi_{\text{op}}, \bm{s}, \mathcal{{F}})$ and,  $\mathcal{Q}_\text{pl}(\bm{s},\bm{a}^{\text{pl}})$ and $\mathcal{\bar{Q}}_\text{pl}(\bm{s},\bm{a}^{\text{pl}})$ are computed as in the equations from the main text (that depend on $\mathcal Q(s, a^{\text{pl}}, a^{\text{pl}})$ and $\mathcal V(s)$ recursively). By using the extremum operator (that can be either $\max$ or $\min$) we cover the cases where $\beta_{\text{op}}$ is either positive or negative. We continue with the proof by applying Corollary~\ref{col:extremums} to the last equality, which gives 
\begin{align*}
\|  \mathcal{B} \mathcal{V} - \mathcal{B} \mathcal{\bar{V}} \|_\infty 
	& \le \max_{\bm{s}\in \mathcal{S}} \max_{\bm{a}^{\text{pl}}\in\mathcal{A}} \left| \mathcal{Q}_\text{pl}(\bm{s},\bm{a}^{\text{pl}}) - \mathcal{\bar{Q}}_\text{pl}(\bm{s},\bm{a}^{\text{pl}}) \right|. 
\end{align*}
Note that this is valid for both cases when having negative $\beta_{\text{pl}}$ (minimization), whereas the second inequality correspond to positive $\beta_{\text{pl}}$ (maximization). Therefore our proof will cover both, positive and negative values of $\beta_{\text{pl}}$. Now, we are ready to handle the right-side of the equation making use  of Corollary~\ref{col:extremums} once again,
\begin{align*}
	& \max_{\bm{s} \in \mathcal{S}} \max_{\bm{a}^{\text{pl}}\in \mathcal{A}} \left| \mathcal{Q}_\text{pl}(\bm{s},\bm{a}^{\text{pl}}) - \mathcal{\bar{Q}}_\text{pl}(\bm{s},\bm{a}^{\text{pl}}) \right|  \\
&= \max_{\bm{s} \in \mathcal{S}} \max_{\bm{a}^{\text{pl}}\in \mathcal{A}}\Bigg| \ext_{\pi_{\text{op}}} \bigg[  \mathbb {E}_{\pi_{\text{op}}} [\mathcal{R}_{\mathcal{G}}(\bm{s},\bm{a}^{\text{pl}},\bm{a}^{\text{op}} ) \\
& \quad +  \gamma \mathbb {E}_{\mathcal{T}(\bm{s}^{\prime}|\bm{s},\bm{a}^{\text{pl}},\bm{a}^{\text{op}}) }\mathcal{V}(\bm{s}^{\prime})] - \frac{1}{\beta_{\text{op}}} {\text{KL}}(\pi_{\text{op}}||\rho_{\text{op}}) \bigg] \\
	& \quad - \ext_{\pi_{\text{op}}} \bigg[ \mathbb {E}_{\pi_{\text{op}}} [\mathcal{R}_{\mathcal{G}}(\bm{s},\bm{a}^{\text{pl}},\bm{a}^{\text{op}} )  \\
& \quad	+\gamma  \mathbb{E}_{\mathcal{T}_{\mathcal{G}}(\bm{s}^{\prime}|\bm{s},\bm{a}^{\text{pl}},\bm{a}^{\text{op}}) }\mathcal{\bar {V}}(\bm{s}^{\prime})] - \frac{1}{\beta_{\text{op}}} {\text{KL}}(\pi_{\text{op}}||\rho_{\text{op}})  \bigg] \Bigg| \\
	& \le \max_{\bm{s} \in \mathcal{S}} \max_{\bm{a}^{\text{pl}} \in\mathcal{A}} \max_{\bm{a}^{\text{op}}\in \mathcal{A}} \left| \gamma \mathbb E_{\mathcal{T}_{\mathcal{G}}}(\mathcal{V}(\bm{s}^{\prime})- \mathcal{\bar{V}}(\bm{s}^{\prime}) )\right|\\
	& \le \gamma \max_{\bm{s}^{\prime} \in \mathcal{S}}  \left| \mathcal{V}(\bm{s}^{\prime}) - \mathcal{\bar{V}}(\bm{s}^{\prime})  \right|\\
	& = \gamma \| \mathcal{V} - \mathcal{\bar{V}}\|_\infty.
\end{align*}	
\end{proof}

\subsection{Derivation Bounded Optimal Policies}
In this section we sketch the derivation of the bounded optimal policies, first, for the player and, second,  for the opponent. The player chooses its policy $\pi_{\text{pl}}$ by first doing the extremization $\ext_{\pi_{\text{op}}}$ and then its own maximization $\max_{\pi_{\text{pl}}}$.  
\begin{align*}
&\pi_{\text{pl}}^\star(\bm a^{\text{pl}}|\bm s) = \arg\max_{\pi_{\text{pl}}} \sum_{\bm{a}^{\text{pl}}} \pi_{\text{pl}}(\bm{a}^{\text{pl}}|\bm{s}) \Bigg( \\ \nonumber
&\frac{1}{\beta_{\text{op}}}  \log \sum_{\bm{a}^{\text{op}}} \rho_{\text{op}} (\bm{a}^{\text{op}}|\bm{s})  \exp\Big(\beta_{\text{op}} \big( \mathcal R_\mathcal G(\cdot) + \gamma \mathbb{E}_{\mathcal{T}_{\mathcal{G}}}[\mathcal{V}^{\star}(\bm{s}^{\prime})] \big)\Big)\\
&-\frac{1}{\beta_{\text{pl}}} \log \frac{\pi_{\text{pl}}(\bm{a}^{\text{pl}}|\bm{s})}{\rho_{\text{pl}}(\bm{a}^{\text{pl}}|\bm{s})}  
\Bigg).
\end{align*}
Solving the maximization problem $\max_{\pi_{\text{pl}}}$ by applying standard variational calculus we obtain the equation in the main manuscript. 

In contrast to the case of the player, the policy of the opponent  $\pi_{\text{op}}$ is computed by interchanging the extremization operators from $\max_{\pi_{\text{pl}}} \ext_{\pi_{\text{op}}}$ to $ \ext_{\pi_{\text{op}}}\max_{\pi_{\text{pl}}}$. Therefore, we have to solve first the inner maximization problem $\max_{\text{pl}}$ over the player's policy and then its own extremization $\ext_{\pi_{\text{op}}}$ (that is a maximization for  $\beta_{\text{op}}>0$ and a minimization for $\beta_{\text{op}}<0$). Solving first for  $\max_{\text{pl}}$ gives
\begin{align*}
&\pi_{\text{op}}^\star(\bm a^{\text{op}}|\bm s) = \arg \ext_{\pi_{\text{op}}} \sum_{\bm{a}^{\text{op}}} \pi_{\text{op}}(\bm{a}^{\text{op}}|\bm{s}) \Bigg( \\ \nonumber
&\frac{1}{\beta_{\text{pl}}}  \log \sum_{\bm{a}^{\text{pl}}} \rho_{\text{pl}} (\bm{a}^{\text{pl}}|\bm{s})\exp\Big(\beta_{\text{pl}} \big( \mathcal R_\mathcal G(\cdot) + \gamma \mathbb{E}_{\mathcal{T}_{\mathcal{G}}}\mathcal{V}^{\star}(\bm{s}^{\prime}) \big)\Big)\\
&-\frac{1}{\beta_{\text{op}}} \log \frac{\pi_{\text{op}}(\bm{a}^{\text{op}}|\bm{s})}{\rho_{\text{op}}(\bm{a}^{\text{op}}|\bm{s})}  
\Bigg).
\end{align*}
Similarly, by applying standard variational calculus we can solve for $\ext_{\pi_{\text{op}}}$ that gives the policy for the opponent written in the main manuscript.

\subsection{Maximum likelihood for estimation of $\beta_{\text{op}}$}

Consider the dataset of the form $\mathcal{D} = \left\lbrace \bm{s}_{i},\bm{a}_{i}^{\text{pl}},\bm{a}_{i}^{\text{op}} \right\rbrace_{i=1}^{m}$ with $m$ being the total number of data points. The actions of the opponent are sampled according to a fixed  distribution $\bm{a}_{i}^{\text{op}} \sim \hat \pi(\bm{a}_{i}^{\text{op}} | \bm s_i)$ unknown to the player that can be approximated with the player's model of the opponent $\pi_{\text{op}}^{\star} (\bm{a}^{\text{op}}_{i}|\bm{s}_{i})$ that is parametrized by the estimate of $\beta_{\text{op}}$. The likelihood of the data $\mathcal D$ can be written using the player's model of the opponent  as
\begin{align*}
P(\mathcal{D}|\beta_{\text{op}}) &= \prod_{i=1}^{m} \pi_{\text{op}}^{\star} (\bm{a}^{\text{op}}_{i}|\bm{s}_{i}) \\
&= \prod_{i=1}^{m} \frac{1}{Z_{\text{op}}(\bm{s}_{i})} \rho_{\text{op}}(\bm{a}_{i}^{\text{op}}|\bm{s}_{i}) \exp\Bigg(\beta_{\text{op}}\Bigg(\frac{1}{\beta_{\text{pl}}}\log \sum_{\bm{a}^{\text{pl}}} \\
&\hspace{3em}\rho_{\text{pl}}(\bm{a}^{\text{pl}}|\bm{s}_{i})\exp\Bigg(\beta_{\text{pl}}\mathcal{Q}(\bm{s},\bm{a}^{\text{pl}},\bm{a}^{\text{op}})\Bigg)\Bigg)\Bigg),
\end{align*}
where 
\begin{align*}
Z(\bm{s}_{i}) = \sum_{\bm{a}_{i}^{\text{op}}} \rho_{\text{op}}(\bm{a}_{i}^{\text{op}}|\bm{s}_{i})\exp\Bigg(\frac{\beta_{\text{op}}}{\beta_{\text{pl}}}\log \sum_{\bm{a}^{\text{pl}}}\rho_{\text{pl}}(\bm{a}^{\text{pl}}|\bm{s}_{i})  \\
\exp\Bigg(\beta_{\text{pl}}\mathcal{Q}(\bm{s},\bm{a}^{\text{pl}},\bm{a}^{\text{op}})\Bigg)
\Bigg),
\end{align*}
Then the partial derivative with respect to $\beta_{\text{op}}$ is
\begin{align*}
 	& \frac{\partial }{\partial \beta_{\text{op}}} \log P(\mathcal D | \beta_{\text{op}} ) = \sum_{i=1}^m \Bigg[ \frac{1}{\beta_{\text{op}}} \log Z(\bm{s}_i, \bm{a}_i^{\text{op}}) - \\
	&  \quad \sum_{\bm{a}_i^{\text{op}} \in \mathcal A} \frac{1}{Z(\bm{s}_i)}  \rho_{\text{op}}(\bm{a}_i^{\text{op}}|\bm{s}_i) \exp \bigg( \frac{\beta_{\text{op}}}{\beta_{\text{pl}}} \log Z(\bm{s}_i,\bm{a}_i^\text{op}) \bigg)  \times \\ 
	& \hspace{14em}\frac{1}{\beta_{\text{pl}}} \log Z(\bm{s}_i,\bm{a}_i^\text{op}) \Bigg]\\
 	& = \frac{1}{\beta_{\text{pl}}} \sum_{i=1}^m \bigg[  \log Z(\bm{s}_i,\bm{a}_i^\text{op}) \\
 	& \hspace{2cm} -  \sum_{\bm a_i^{\text{op}} \in \mathcal A} \pi^\star_{\text{op}}(\bm{a}_i^\text{op} |\bm s_i) \log Z(\bm{s}_i,\bm{a}_i^\text{op}) \bigg],
\end{align*}
where 
\begin{equation*}
	Z(\bm s_i, \bm a_i^\text{op}) = \sum_{\bm{a}^\text{pl}} \rho_\text{pl} (\bm a^{\text{pl}} |\bm s_i) \exp ( \beta_\text{pl} \mathcal Q(\bm s_i,\bm a^\text{pl},\bm a_i^{\text{op}})).
\end{equation*}
This gradient is readily applicable given that the normalizing functions can be computed exactly due to the action space being discrete. Next we give the complete description of the algorithm when learning the soft values and $\beta_{\text{op}}$ simultaneously.
\begin{algorithm}\caption{Tabular Two-Player Soft Q-Learning with $\beta_{\text{op}}$ estimation.}
\begin{algorithmic}[1]
\STATE Given $\rho_{\text{pl}}$, $\rho_{\text{op}}$, $\beta_{\text{pl}}$, $\mathcal A$, $\mathcal S$ and learning rates $\alpha$ and $\alpha_2$
\STATE $\mathcal{Q} (\bm s, \bm a^{\text{pl}}, \bm a^{\text{op}}) \gets 0$ 
\STATE $\beta_{\text{op}} \gets$ arbitrary initial $\beta_{\text{op}}$ estimate
\WHILE{not converged}
\STATE Collect transition $\big(\bm s_t, \bm a^{\text{pl}}_t, \bm a^{\text{op}}_t, \mathcal{R}_{\text{t}}, \bm{s'}_t \big) $, where $\bm{a}^{\text{pl}} \sim \pi_{\text{pl}}$, $\bm{a}^{\text{op}} \sim \pi_{\text{op}}$ and $\mathcal R_t$ is the reward at time $t$. 
\STATE $\mathcal{Q} (\bm s_t, \bm a^{\text{pl}}_t, \bm a^{\text{op}}_t) \gets \mathcal{Q} (\bm s_t, \bm a^{\text{pl}}_t, \bm a^{\text{op}}_t) + \alpha \Big( \mathcal R_{t} + \gamma \mathcal V(\bm{s'}_t) - \mathcal Q (\bm s_t, \bm a^{\text{pl}}_t, \bm a^{\text{op}}_t) \Big)$, 
\STATE $\beta_{\text{op}} \gets \beta_{\text{op}} + \alpha_2 \frac{\partial}{\partial \beta_{\text{op}}} \log P(\mathcal D | \beta_{\text{op}} )$ ($\alpha_2$: learning rate)
\ENDWHILE

\STATE {\bf return} $\mathcal{Q} (\bm s, \bm a^{\text{pl}}, \bm a^{\text{op}})$
\end{algorithmic}
\label{Algo:tabular_with_beta_estimation}
\end{algorithm}

\end{document}